\documentclass{neus2025}

\title[PST for Differentiable Ternary Logic Gate Networks]{Polynomial Surrogate Training for Differentiable\\Ternary Logic Gate Networks}

\usepackage{newtxtext}
\usepackage{newtxmath}

\usepackage{amsmath}
\usepackage{amssymb}
\usepackage{graphicx}
\usepackage[dvipsnames]{xcolor}
\usepackage{booktabs}
\usepackage{hyperref}
\usepackage{algorithm}
\usepackage{algpseudocode}
\usepackage{caption}

\usepackage{multirow}
\usepackage{subcaption}

\usepackage{enumitem}
\usepackage{bm}

\definecolor{logicHeader}{gray}{0.3}
\definecolor{logicTrue}{HTML}{90EE90}
\definecolor{logicFalse}{HTML}{F08080}
\definecolor{logicText}{HTML}{FFFFFF}

\newcommand{\R}{\mathbb{R}}

\newcommand{\Var}{\mathrm{Var}}

\newcommand{\TRUE}{\textsc{True}}
\newcommand{\FALSE}{\textsc{False}}
\newcommand{\UNKNOWN}{\textsc{Unknown}}
\newcommand{\T}{\mathcal{T}}

\newcommand{\loss}{\mathcal{L}}

\newcommand{\clip}{\operatorname{clip}}

\newcommand{\scsmall}{\textsc{small}}
\newcommand{\scmedium}{\textsc{medium}}
\newcommand{\sclarge}{\textsc{large}}
\newcommand{\scdeeper}{\textsc{deeper}}
\newcommand{\scvlarge}{\textsc{vlarge}}
\newcommand{\schuge}{\textsc{huge}}

\newcommand{\pst}{\text{PST}}
\newcommand{\dlgn}{\text{DLGN}}
\newcommand{\tlgn}{\text{TLGN}}
\newcommand{\lmax}{\lambda_{\max}}

\makeatletter
\let\jmlr@orig@appendix\appendix
\renewcommand{\appendix}{%
  \jmlr@orig@appendix
  \setcounter{equation}{0}%
  \renewcommand{\theequation}{\thesection.\arabic{equation}}%
  \@addtoreset{equation}{section}%
}
\makeatother

\coltauthor{\Name{Sai Sandeep Damera} \Email{sdamera@umd.edu}\\ \Name{Ryan Matheu} \Email{rmatheu@umd.edu}\\ \Name{Aniruddh G. Puranic} \Email{puranic@umd.edu}\\ \Name{John S. Baras} \Email{baras@umd.edu}\\ \addr University of Maryland, College Park, MD, USA}

\begin{document}
  \maketitle

  \begin{abstract}
    
    \noindent Differentiable logic gate networks (DLGNs) learn compact, interpretable Boolean circuits via gradient-based training, but all existing variants are restricted to the 16 two-input binary gates.
    Extending DLGNs to Ternary Kleene $K_3$ logic and training DTLGNs where the \textsc{Unknown} state enables principled abstention under uncertainty is desirable. However, the support set of potential gates per neuron explodes to $19{,}683$, making the established softmax-over-gates training approach intractable.
    We introduce \emph{Polynomial Surrogate Training} (PST), which represents each ternary neuron as a degree-$(2,2)$ polynomial with 9 learnable coefficients (a $2{,}187\times$ parameter reduction) and prove that the gap between the trained network and its discretized logic circuit is bounded by a data-independent commitment loss that vanishes at convergence.
    Scaling experiments from 48K to 512K neurons on CIFAR-10 demonstrate that this hardening gap contracts with overparameterization. Ternary networks train $2$-$3\times$ faster than binary DLGNs and discover true ternary gates that are functionally diverse.
    On synthetic and tabular tasks we find that the \textsc{Unknown} output acts as a Bayes-optimal uncertainty proxy, enabling selective prediction in which ternary circuits surpass binary accuracy once low-confidence predictions are filtered.
    More broadly, PST establishes a general polynomial-surrogate methodology whose parameterization cost grows only quadratically with logic valence, opening the door to many-valued differentiable logic.

  \end{abstract}

  \begin{keywords}
    Ternary Logic, Logic Gate Networks, Neuro-Symbolic AI.
  \end{keywords}
  
  \section{Introduction}
\label{sec:introduction}

Logic gate networks replace conventional arithmetic neurons with compositions of discrete two-input logic gates, producing circuits that are inherently compact and interpretable \citep{petersen2022deep}. Differentiable logic gate networks (DLGNs) make these circuits trainable through gradient descent by introducing continuous relaxations of the gate selection process.
Since their introduction, DLGNs have been extended to convolutional architectures \citep{petersen2024convolutional}, recurrent variants \citep{buhrer2025recurrent}, and connection-optimized formulations \citep{mommen2025method}, demonstrating competitive accuracy at extreme parameter efficiency.

All existing DLGNs share a common design: each two-input neuron maintains a softmax distribution over a fixed set of $K=16$ Boolean gates, computes a weighted blend of their outputs during training, and selects a single gate via argmax at inference.
This \emph{softmax-over-gates} regime is viable precisely because the binary gate space is small enough to enumerate exhaustively.
Yet it creates a structural train-test gap: the soft output $\sum_k p_k\, g_k(a,b)$ disagrees with the hard output $g_{k^*}(a,b)$ unless the softmax probability concentrates entirely on one gate. This is a degenerate limit with vanishing gradients.
Concurrent work has proposed Gumbel-noise injection and straight-through estimation to mitigate this gap \citep{yousefi2025mind}, though the fundamental issue with training cost remains unaddressed.

\paragraph{Why Ternary Logic?}
Binary DLGNs inherit the fundamental limitation of classical Boolean logic: every signal is either \TRUE{} or \FALSE{}, with no capacity to express uncertainty or indeterminate outcomes.
Many tasks require exactly this capacity---classification under sensor dropout, temporal logic evaluation over finite observation windows, and medical diagnosis from incomplete records all demand \emph{principled abstention}, where a network can output ``not yet determined'' rather than committing to an unsupported decision.
Kleene's strong three-valued logic $K_3$ \citep{kleene1952introduction, kleene2002mathematical}, with truth values $\T = \{-1,0,+1\}$ representing \FALSE{}, \UNKNOWN{}, and \TRUE{}, provides the minimal extension of Boolean logic that natively expresses this uncertainty.
However, extending DLGNs to ternary logic reveals a fundamental limitation of the softmax-over-gates paradigm: a two-input ternary gate admits $3^{3^2} = 19{,}683$ possible truth tables, making the categorical distribution approach intractable.

\paragraph{Polynomial Surrogate Training.}
We introduce PST, which replaces the categorical parameterization entirely: instead of learning a distribution over gates, each neuron learns the coefficients of a low-degree polynomial that directly represents a function over $\T^2$.
For ternary logic, this is a degree-$(2,2)$ polynomial with exactly 9 coefficients, reducing parameters by $2{,}187\times$ compared to softmax-over-gates while covering the full 19,683-gate vocabulary.
PST is everywhere differentiable, requires no Gumbel tricks or softmax temperatures, 
and we prove that the per-neuron discretization error is bounded by a data-independent 
commitment loss (Theorem~\ref{thm:pst-gap}), providing a principled mechanism to 
narrow the train-test gap.

\paragraph{The hardening gap.}
While commitment regularization bounds per-neuron discretization error, activation 
quantization across layers creates network-level train-test mismatch.
Empirically, this gap contracts with overparameterization, falling from 14.1pp to 
3.7pp as scale increases from 96K to 512K neurons on CIFAR-10.

\paragraph{Fourier analysis on $\T^2$.}
We develop a Fourier-analytic framework on $\{-1,0,+1\}^n$ with an orthonormal basis appropriate for Kleene $K_3$ logic, including a quadratic term $\varphi_2(x) = x^2 - 2/3$ with no Boolean analogue that captures \UNKNOWN-sensitivity.
Fourier coefficients serve as spectral complexity measures for principled regularization and post-hoc analysis.

\begin{figure}[t]
    \centering
    \includegraphics[width=1\linewidth]{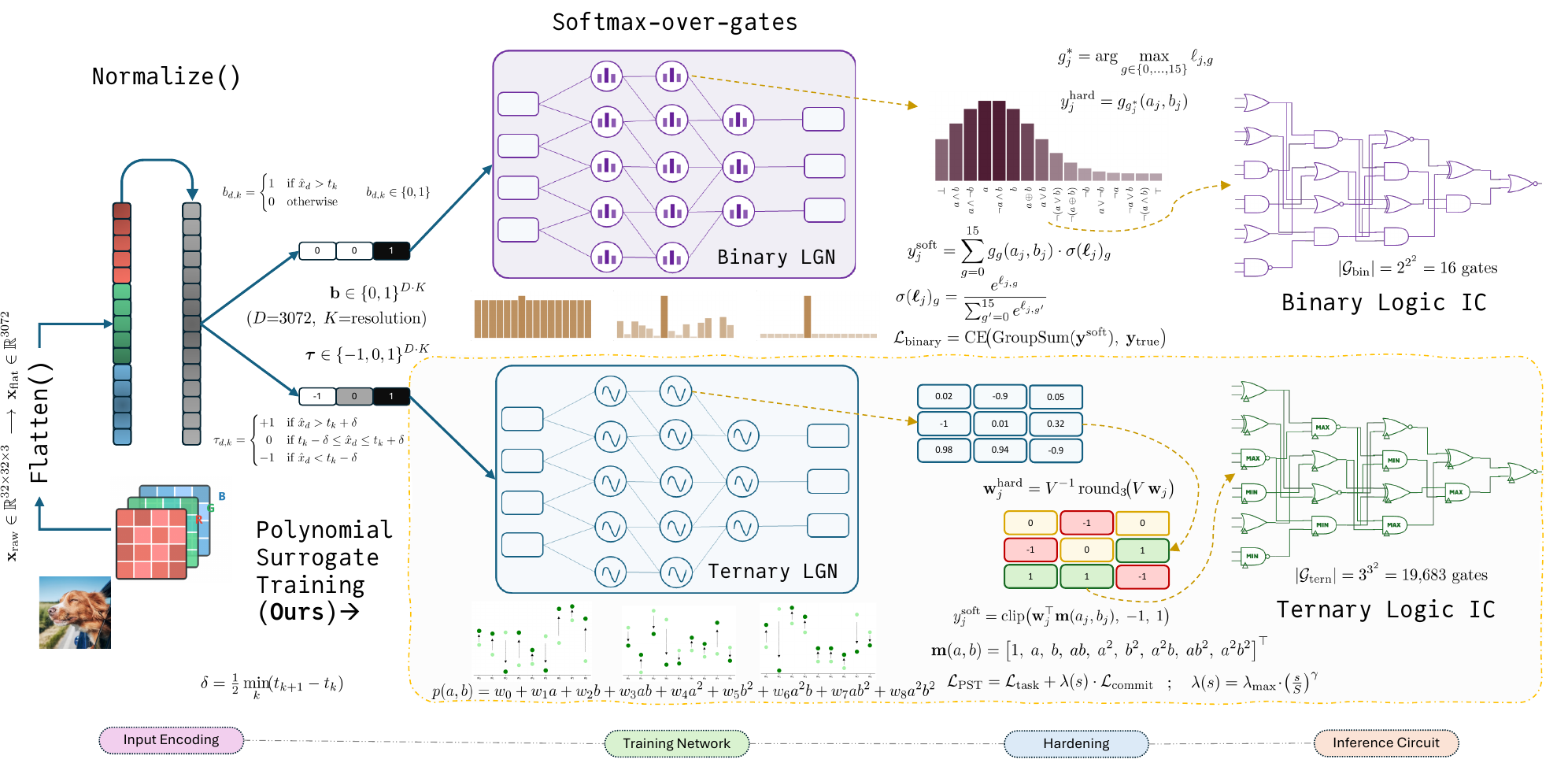}
    \caption{End-to-end comparison of binary DLGN training (top) and the proposed
    Polynomial Surrogate Training pipeline for ternary logic gate networks (bottom).
    \textbf{Input encoding} maps normalized features to bits/trits using temperature thresholding. 
    \textbf{Training:} binary DLGNs learn softmax distributions over 16~gates per
    neuron; PST instead learns 9~polynomial coefficients per neuron, parameterising
    the full space of $19{,}683$ ternary gates directly.
    \textbf{Hardening:} binary networks select the argmax gate; PST evaluates each
    polynomial on the $3{\times}3$ ternary grid, rounds to the nearest valid truth
    table, and recovers discrete gate coefficients via $\mathbf{w}_j^{\mathrm{hard}}
    = \mathbf{V}^{-1}\operatorname{round}_{\mathcal{T}}(\mathbf{V}\mathbf{w}_j)$.
    \textbf{Inference:} both pipelines produce logic circuits; which can be taped out as ultra-efficient ASICs for Inference}
    \label{fig:overview}
\end{figure}

\paragraph{Contributions.}
\begin{enumerate}[leftmargin=*,itemsep=2pt]
    \item We introduce \textbf{Polynomial Surrogate Training (PST)}, the first training regime for logic gate networks that parameterizes the function space directly (9 coefficients per neuron for ternary logic) rather than through a distribution over gates. PST is everywhere differentiable and provides \textbf{provable bounds on the per-neuron hardening gap} via a data-independent commitment loss (Theorem~\ref{thm:pst-gap}).
    \item We develop a \textbf{Fourier-analytic framework} on $\{-1,0,1\}^n$ with the orthogonal basis appropriate for Kleene $K_3$ logic, providing spectral characterization and principled regularization of learned gates.
    \item Scaling experiments from 48K to 512K neurons on CIFAR-10 demonstrate PST trains ternary circuits $2$-$3\times$ faster than binary DLGNs. On synthetic tasks, ternary circuits with \UNKNOWN{} outputs achieve selective prediction, surpassing binary accuracy when low-confidence predictions are filtered.
\end{enumerate}

  \section{Related Work}
\label{sec:related}

\paragraph{Efficient Neural Architectures.}
The computational demands of deep neural networks have motivated efficient model design \cite{bengio2013estimating}, including binary networks, sparse networks \citep{frankle2018lottery}, and lookup-table approaches \citep{zhu2016trained}.
DLGNs reduce every computation to a two-input logic gate lookup.
Weightless neural networks (WNNs) \citep{susskind2022weightless} use arbitrary lookup tables, whereas PST neurons are constrained to encode \emph{logic gates} with compositional semantics, enabling formal verification and spectral analysis.

\paragraph{Differentiable Logic Gate Networks}
\label{dt-lgn:subsec::background:::dlgn}

Training networks of discrete components like logic gates is challenging because they are non-differentiable. \cite{petersen2022deep} first introduced Differentiable Logic Gate Networks (DLGNs) to overcome this issue. Their approach relies relaxing discrete Boolean values $\{0,1\}$ to continuous $[0,1]$ and representing each of the 16 two-input gates as differentiable functions. Crucially, the learning does not involve selecting a single gate per neuron a priori. Instead, each neuron in a DLGN learns a categorical probability distribution over all 16 possible logic gates, computing a weighted blend during training and selecting the mode gate via argmax at inference to produce a discrete logic circuit. This final hardened network is a pure logic circuit, composed of fixed connections and discrete gates, making it extremely fast at inference and fully transparent. These networks can, in theory, be synthesized directly onto digital hardware for further speed, energy efficiency, and cybersecurity gains.

\section{Background and Preliminaries}
\label{dt-lgn:sec::background}
\subsection{Polynomial Representations for Boolean Logic Functions}
\label{dt-lgn:subsec::polynomial_representations}

The mathematical foundation of PST rests on the fact that logic functions over finite domains have exact polynomial representations.
Boolean functions admit exact real polynomial representations \citep{nisan1994degree, o2014analysis}. Concretely, a real polynomial $p:\mathbb{R}^n\to\mathbb{R}$ is said to represent a Boolean function $f:{\{-1,\,1\}}^n\to\{-1,\,1\}$ if for all $x\in{\{-1,\,1\}}^n$, $p(x)=f(x)$. Here we interpret \TRUE~as $+1$ and \FALSE~as $-1$. Because $z^{2k}=1,\,z^{2k+1}=z,\,k\in\mathbb{N}$ for all $z\in\{-1,\,1\}$, these polynomials are reduced to multi-linear polynomials. Let the set $P_n$ contain all real multi-linear polynomials that represent Boolean functions with $n$ inputs. Then $(P_n,\,{\cdot}\,)$ is a finite abelian group with group multiplication ``$\,{\cdot}\,$'' defined as polynomial multiplication. This subtle notion allows for the definition of a Fourier expansion on representing polynomials, the details of which are described in Appendix \ref{appendix:polynomial_algebra}.

We extend this to the balanced ternary system with truth constants $\T = \{-1, 0, +1\}$ representing \FALSE, \UNKNOWN, and \TRUE~respectively. Analogous to the Boolean case, a real multivariate polynomial $p$ is said to represent a ternary function of the form $f:{\T}^n\to\T$ if for all $x\in{\T}^n$, $p(x)=f(x)$. Consider $z\in\mathcal{T}$. For integer $k$ odd, $k\geq1$, $z^k=z$. For integer $k$ even, $k\geq2$, $z^k=z^2$. Therefore multivariate polynomials representing ternary functions are \textit{multi-quadratic}, containing variables with powers no greater than two.

\subsection{Fourier Analysis on $\T$}
\label{dt-lgn:subsec::fourier_basis}

The monomial basis, while convenient for parameterization, is not orthogonal under the uniform inner product $\langle f,\,g \rangle = \frac{1}{3^n}\sum_{x \in \T^n} f(x)\, g(x)$.
Consider the set of single input ($n=1$) ternary functions of the form $f:\mathcal{T}\to\mathcal{T}$. Applying Gram-Schmidt orthogonalization to the basis $\{1,\,x,\, x^2\}$ yields the univariate orthogonal basis:
\[
    \varphi_0(x) = 1, \qquad \varphi_1(x) = x, \qquad \varphi_2(x) = x^2 - \tfrac{2}{3}.
\]
The term $\varphi_2$ is the ``centered quadratic'': it measures whether $x$ is extreme ($\pm 1$, where $\varphi_2(x) = \tfrac{1}{3}$) versus neutral ($0$, where $\varphi_2(x) = -\tfrac{2}{3}$).
This basis function has no Boolean analogue---it captures the \UNKNOWN-sensitivity that distinguishes Kleene $K_3$ from classical logic.
The bivariate Fourier basis consists of the nine products $\{\varphi_i(x)\,\varphi_j(y)\}_{0 \leq i,j \leq 2}$, and the Fourier expansion of any $f: \T^2 \to \R$ is $f = \sum_{i,j} \hat{f}_{ij}\, \varphi_i \varphi_j$.
The Fourier coefficients $\hat{f}_{ij}$ serve as spectral complexity measures: the $L_1$ norm $\sum |\hat{f}_{ij}|$ quantifies gate complexity, and we use it as a regularizer during training to bias learning toward spectrally sparse (interpretable) gates.
The full derivation is provided in Appendix~\ref{appendix:spectral}.

  \section{The Polynomial Surrogate Training Framework for DTLGNs}
\label{dt-lgn:sec::formulation}
\subsection{PST Formulation}
\label{dt-lgn:subsec::pst_formulation}

The softmax-over-gates training regime creates an irreducible train-test gap: the soft network computes convex combinations of gate outputs while the hard network executes a single discrete gate.
Gumbel relaxation approaches \citep{kim2023deep} and Straight Through Estimation techniques \citep{yousefi2025mind} reduce but do not eliminate this gap, and are intractable for ternary logic's 19,683-gate support.

Our proposed Polynomial Surrogate Training (PST) regime addresses this gap by replacing the categorical parameterization with a direct polynomial parameterization. Each neuron learns a 9-coefficient polynomial $p_{\mathbf{w}}: \mathbb{R}^2 \rightarrow \mathbb{R}$ that approximates a ternary gate truth table on $\mathcal{T}^2$, with per-neuron discretization error bounded by a commitment regularizer.

\paragraph{PST neuron.}
Each PST neuron represents a two-input ternary function as a degree-$(2,2)$ polynomial with 9 learnable coefficients $\mathbf{w} \in \R^9$:
\begin{equation}
\label{eq:pst-neuron}
p_{\mathbf{w}}(a, b) = \mathbf{w}^\top \mathbf{m}(a, b),
\end{equation}
where $\mathbf{m}(a,b) = [1,\, a,\, b,\, ab,\, a^2,\, b^2,\, a^2 b,\, ab^2,\, a^2 b^2]^\top$ is the monomial basis.
Since the $9{\times}9$ Vandermonde matrix $\mathbf{V}$ evaluating this polynomial on $\T^2$ is invertible, the parameterization is universal: any function $f: \T^2 \to \R$ has a unique representation.
The polynomial is $C^\infty$-smooth and linear in $\mathbf{w}$, requiring no softmax or Gumbel relaxations, and evaluates in 8 multiplications and 8 additions per neuron.

\paragraph{Network architecture.}
A PST-DTLGN is specified by its depth $L$, widths $\{n_l\}_{l=0}^L$, a connectivity map $\mathbf{C}$ assigning two parent neurons to each neuron, and polynomial coefficients $\mathbf{w}_j^{(l)} \in \R^9$.
The training forward pass applies a clip nonlinearity after each polynomial evaluation:
\begin{equation}
\label{eq:forward-train}
h_j^{(l)} = \clip\!\Big(p_{\mathbf{w}_j^{(l)}}\!\big(h_{s_j}^{(\cdot)},\, h_{t_j}^{(\cdot)}\big)\Big),
\end{equation}
where $\clip(x) = \max(-1, \min(1, x))$.
Clipping preserves the range $[-1,1]$, prevents polynomial blowup across layers, preserves grid points $\{-1,0,+1\}$ exactly, and implicitly regularizes toward ternary-range outputs through its zero-gradient saturation regime.

\paragraph{Initialization.}
Weights are drawn i.i.d.\ from $\mathcal{N}(0, 0.45^2)$, chosen to ensure $\Var[p_{\mathbf{w}}(a,b)] \approx 1$ on ternary inputs, analogous to Xavier initialization \citep{pmlr-v9-glorot10a}.

\subsection{Training}
\label{dt-lgn:subsec::pst_training}

The training objective combines task loss with a commitment regularizer:
\begin{equation}
\label{eq:total-loss}
\loss(\mathbf{W}) = \loss_{\mathrm{task}}(\mathbf{W}) + \lambda(t) \cdot \mathcal{R}_A(\mathbf{W}),
\end{equation}
where $\mathbf{W} = \{\mathbf{w}_j^{(l)}\}$ is the set of all polynomial coefficients and the commitment loss is:
\begin{equation}
\label{eq:commitment-loss}
\mathcal{R}_A(\mathbf{W}) = \frac{1}{N}\sum_j \frac{1}{q^2}\sum_{(a,b) \in \mathcal{Q}^2} \operatorname{dist}\!\big(p_{\mathbf{w}_j}(a,b),\, \mathcal{Q}\big)^2,
\end{equation}
with $\operatorname{dist}(x, \mathcal{Q}) = \min_{v \in \mathcal{Q}} |x - v|$, where $\mathcal{Q}$ is the set of $q$ truth values.
For ternary logic, $q=3$ and $\mathcal{Q} = \{-1,0,1\}$.
The regularization weight $\lambda(t) = \lambda_{\max}(t/T)^\gamma$ is annealed from $\approx 0$ (free exploration) to $\lambda_{\max}$ (strong commitment) using a scheduling regime. 

\subsection{Hardening: From Polynomials to Gates}
\label{dt-lgn:subsec::pst_discretization}

At inference, each PST neuron is converted to a discrete ternary gate via Algorithm~\ref{alg:hardening}.
Since rounding always produces a valid truth table in $\T^9$, every trained PST neuron discretizes to one of the 19,683 possible two-input ternary gates without requiring a curated vocabulary---a fundamental advantage over the softmax regime, where the vocabulary must be fixed before training.

\begin{algorithm}[t]
\caption{PST Neuron Hardening}\label{alg:hardening}
\KwIn{Polynomial coefficients $\mathbf{w}_j \in \R^9$, Vandermonde matrix $\mathbf{V}$, gate library $\mathcal{G}$}
\KwOut{Discrete gate $g_j^*: \T^2 \to \T$}
$\mathbf{t}_j \gets \mathbf{V}\mathbf{w}_j$ \tcp*{Evaluate polynomial at grid points $\T^2$}
$\bar{\mathbf{t}}_j \gets \operatorname{round}_\T(\mathbf{t}_j)$ \tcp*{Round each entry to nearest value in $\T$}
$g_j^* \gets \mathcal{G}[\bar{\mathbf{t}}_j]$ \tcp*{Lookup gate in precomputed library}
\Return{$g_j^*$}
\end{algorithm}

\subsection{The Hardening Gap}
\label{dt-lgn:subsec::hardening-gap}

Hardening converts the continuous training-time model into a discrete circuit by rounding each neuron's polynomial to the nearest gate in the truth-table lattice $\Lambda_q = \mathcal{Q}^{q^2}$ (see Appendix~\ref{appendix:lattice} for lattice geometry).
For ternary logic, this lattice has $3^9 = 19{,}683$ points in $\R^9$.

\begin{theorem}[PST Hardening Gap Bound]
\label{thm:pst-gap}
Let $\mathcal{N}$ be a $q$-logic PST network with $N$ neurons.
Let $\mathbf{t}_j = [p_{\mathbf{w}_j}(a,b)]_{(a,b) \in \mathcal{Q}^2}$ be neuron $j$'s soft truth table and $\bar{\mathbf{t}}_j \in \Lambda_q$ its hardened truth table. Then
\[
    \mathcal{R}_A(\mathbf{W}) = \frac{1}{N}\sum_{j=1}^{N} \frac{1}{q^2}\|\mathbf{t}_j - \bar{\mathbf{t}}_j\|_2^2.
\]
\end{theorem}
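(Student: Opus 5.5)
The identity follows by unpacking the definitions; no inequality is involved. Fix a neuron $j$ and a grid point $(a,b)\in\mathcal{Q}^2$, and write $x = p_{\mathbf{w}_j}(a,b)$ for the corresponding entry of $\mathbf{t}_j$. Hardening (Algorithm~\ref{alg:hardening}, or its $q$-valued analogue) sets the corresponding entry of $\bar{\mathbf{t}}_j$ to $\operatorname{round}_{\mathcal{Q}}(x)$, a nearest point of $\mathcal{Q}$ to $x$. So the first step is the pointwise identity
\[
  |x - \operatorname{round}_{\mathcal{Q}}(x)| = \min_{v\in\mathcal{Q}}|x-v| = \operatorname{dist}(x,\mathcal{Q}).
\]
This holds because $\mathcal{Q}$ is finite, so the minimum is attained, and $\operatorname{round}_{\mathcal{Q}}$ is by definition a minimizer.

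The second step squares both sides and sums over the $q^2$ grid points. Since the coordinates of $\mathbf{t}_j$ and $\bar{\mathbf{t}}_j$ are indexed by $\mathcal{Q}^2$, this gives
\[
  \sum_{(a,b)\in\mathcal{Q}^2}\operatorname{dist}\big(p_{\mathbf{w}_j}(a,b),\mathcal{Q}\big)^2 = \|\mathbf{t}_j - \bar{\mathbf{t}}_j\|_2^2.
\]
Equivalently, $\bar{\mathbf{t}}_j$ is the Euclidean projection of $\mathbf{t}_j$ onto the product lattice $\Lambda_q=\mathcal{Q}^{q^2}$. The projection onto a product set decouples coordinatewise, so $\|\mathbf{t}_j-\bar{\mathbf{t}}_j\|_2 = \operatorname{dist}(\mathbf{t}_j,\Lambda_q)$.

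The third step divides by $q^2$, averages over $j$ with weight $1/N$, and compares the result with \eqref{eq:commitment-loss}. The two expressions agree term by term, which proves the claim.

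The only point that needs care is tie-breaking. When $x$ is equidistant from two values of $\mathcal{Q}$, e.g.\ $x=\pm\tfrac12$ for $\mathcal{Q}=\{-1,0,1\}$, $\operatorname{round}_{\mathcal{Q}}$ is not uniquely defined. I would note that every tie-breaking rule still picks a minimizer, so the distance identity holds regardless of the choice, and hence so does the theorem. For a general $q$-valued $\mathcal{Q}$ the argument is unchanged.

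Finally, as a remark interpreting the theorem rather than part of its proof, I would note that the right-hand side is data-independent: it depends only on $\mathbf{W}$, through $\mathbf{t}_j=\mathbf{V}\mathbf{w}_j$. So $\mathcal{R}_A\to 0$ forces every soft truth table onto the lattice $\Lambda_q$, and hardening then leaves each neuron's function on $\mathcal{Q}^2$ unchanged.
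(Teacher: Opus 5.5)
Your proof is correct and follows the paper's argument. Both observe that each hardened entry is a nearest point of $\mathcal{Q}$, so the squared entrywise error equals $\operatorname{dist}(p_{\mathbf{w}_j}(a,b),\mathcal{Q})^2$; summing over $\mathcal{Q}^2$, dividing by $q^2$, and averaging over the $N$ neurons then reproduces $\mathcal{R}_A(\mathbf{W})$ term by term. Your notes on tie-breaking and on the $q$-valued analogue of the hardening algorithm are minor clarifications that the paper leaves implicit.
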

\begin{proof}
By Algorithm~\ref{alg:hardening}, hardening rounds each polynomial evaluation to the nearest value in $\mathcal{Q}$, so $\bar{t}_{j,(a,b)} = \arg\min_{v \in \mathcal{Q}} |p_{\mathbf{w}_j}(a,b) - v|$.
Therefore $(p_{\mathbf{w}_j}(a,b) - \bar{t}_{j,(a,b)})^2 = \operatorname{dist}(p_{\mathbf{w}_j}(a,b), \mathcal{Q})^2$.
Summing over all grid points and averaging over neurons yields the result.
\end{proof}

\begin{remark}[Scaling and Data-Independence]
PST's design improves with logic valence: a $q$-logic neuron requires only $q^2$ coefficients (vs.\ softmax's super-exponential $q^{q^2}$ logits) while achieving $O(1/q)$ rounding tolerance.
The commitment loss $\mathcal{R}_A$ is data-independent, so Theorem~\ref{thm:pst-gap} holds for all inputs in $\mathcal{Q}^n$, not just the training distribution.
\end{remark}

  \section{Results}
\label{dt-lgn:sec::results}

We evaluate PST on two fronts: scaling experiments demonstrating that ternary networks train efficiently at CIFAR-10 scale (Section~\ref{dt-lgn:subsec::results-pst}), and analysis of principled abstention via UNKNOWN ($U$) outputs on synthetic tasks (Section~\ref{dt-lgn:subsec::results-ternary}).
All models use JAX \citep{jax2018github} with Equinox \citep{kidger2021equinox} and Adam.
Binary DLGNs follow \citet{petersen2022deep}. Full details in Appendix~\ref{appendix:expts_results}.

\subsection{Training DTLGNs at scale using PST}
\label{dt-lgn:subsec::results-pst}

We demonstrate that Polynomial Surrogate Training (\pst{}) successfully trains
differentiable ternary logic gate networks (\tlgn{}) at CIFAR-10 scale,
achieving soft accuracy parity with binary \dlgn{}s while training
2--3$\times$ faster. The hardening gap (the accuracy cost of converting
continuous polynomials to discrete ternary circuits) contracts from
14.1\,pp to 3.7\,pp as network scale increases from 96K to 512K neurons,
establishing a clear trajectory toward gap elimination through
overparameterization and commitment loss modulation.

\paragraph{Setup.}
CIFAR-10 (50K train / 10K test) is flattened to 3,072 dimensions and encoded
via resolution-4 balanced encoding ($K{=}3$ thresholds), producing 9,216 binary
or ternary input features. We evaluate six homogeneous 4-layer architectures
from \scsmall{} (4$\times$12K neurons, 48K total) to \schuge{}
(4$\times$128K, 512K total), plus a 5-layer \scdeeper{} variant (60K total);
see Table~\ref{tab:configs} for full configurations. All share identical
random sparse connectivity (seed$\,{=}\,$42) with two inputs per neuron.
Binary \dlgn{} uses Adam (lr$\,{=}\,$0.01), cross-entropy via GroupSum
($k{=}10$, $\tau{=}33.3$). Ternary \pst{} uses Adam (lr adapted per scale),
$\lmax{=}0.1$, $\gamma{=}2.0$ (quadratic lambda annealing), MSE via GroupSum.
Batch size 100 for both. Soft accuracy: full 10K test set; circuit accuracy:
2K samples ($\pm 2.2$\,pp Wilson CI at ${\sim}50\%$).
Table~\ref{tab:main} presents the central results across all scales.

\begin{table}[t]
  \centering
  \caption{\textbf{Performance across all scales.}
    Soft: continuous forward pass (10K test). Circuit: post-hardening discrete
    evaluation (2K samples; $\pm 2.2$\,pp CI). Gap = Soft $-$ Circuit.
    UNK\%: fraction of \tlgn{} output neurons producing zero
    (not sample-level abstention). Speed: \dlgn{}/\tlgn{} wall-time ratio
    (RTX~4090).}
  \label{tab:main}
  \small
  \begin{tabular}{@{}lrcccrr@{}}
    \toprule
    Model & Neurons & Soft Acc & Circuit Acc & Gap (pp) & UNK\% & Speed \\
    \midrule
    \dlgn{}-\scsmall{}    &  48K & 49.3\% & 48.9\% & $+0.5$  & ---    & \multirow{2}{*}{1.5$\times$} \\
    \tlgn{}-\scsmall{}    &  48K & 45.4\% & 42.6\% & $+2.8$  & 5.9\%  & \\
    \addlinespace
    \dlgn{}-\scmedium{}   &  96K & 50.7\% & 51.4\% & $-0.8$  & ---    & \multirow{2}{*}{2.0$\times$} \\
    \tlgn{}-\scmedium{}   &  96K & 50.3\% & 36.1\% & $+14.1$ & 6.2\%  & \\
    \addlinespace
    \dlgn{}-\sclarge{}    & 144K & 51.4\% & 51.7\% & $-0.4$  & ---    & \multirow{2}{*}{2.1$\times$} \\
    \tlgn{}-\sclarge{}    & 144K & 51.5\% & 38.5\% & $+13.0$ & 6.0\%  & \\
    \addlinespace
    \dlgn{}-\scvlarge{}   & 192K & 51.6\% & 52.0\% & $-0.4$  & ---    & \multirow{2}{*}{2.1$\times$} \\
    \tlgn{}-\scvlarge{}   & 192K & 51.2\% & 39.2\% & $+11.9$ & 9.3\%  & \\
    \addlinespace
    \dlgn{}-\schuge{}     & 512K & 52.5\% & 53.0\% & $-0.5$  & ---    & \multirow{2}{*}{3.1$\times$} \\
    \tlgn{}-\schuge{}     & 512K & 52.1\% & 48.4\% & $+3.7$  & 24.6\% & \\
    \addlinespace
    \dlgn{}-\scdeeper{}   &  60K & 49.3\% & 49.4\% & $-0.1$  & ---    & \multirow{2}{*}{1.8$\times$} \\
    \tlgn{}-\scdeeper{}   &  60K & 45.0\% & 43.0\% & $+2.0$  & 7.3\%  & \\
    \bottomrule
  \end{tabular}
\end{table}

\paragraph{PST trains ternary circuits at scale.}
Both architectures achieve comparable soft accuracy at all scales $\geq$
\scmedium{}, plateauing at ${\sim}52\%$, a ceiling imposed by the resolution-4
encoding rather than network capacity. At \schuge{}: \tlgn{} 52.1\% vs.\
\dlgn{} 52.5\% ($p{=}0.57$, not significant). This parity is the key
feasibility result: \pst{}'s free polynomial parameterization (9 coefficients per
neuron, searching 19,683 possible ternary gates) matches the representational
capacity of the constrained softmax-over-16-gates approach, confirming
that the ternary polynomial search space is trainable at scale.

\paragraph{The hardening gap closes with overparameterization.}
The \pst{} hardening gap follows a non-monotonic trajectory: modest at
\scsmall{} ($+2.8$\,pp), peaking at \scmedium{} ($+14.1$\,pp), then
contracting steadily through \sclarge{} ($+13.0$), \scvlarge{} ($+11.9$),
to \schuge{} ($+3.7$\,pp). This contraction is accompanied by a $10\times$
reduction in per-neuron hardening error (0.298 $\to$ 0.029) and a
$4\times$ increase in $U$ output neurons (5.9\% $\to$ 24.6\%). For the
neurons whose hardened truth tables map to zero, contributing no signal to the
GroupSum class scores (a possibility unique to ternary circuits, where $0$ is a
valid output value). This indicates that overparameterization enables implicit
pruning: poorly-committed neurons are mapped to zero-output gates during
hardening, while the surviving neurons are well-committed to valid ternary
truth tables.
The mechanism is detailed in \S\ref{sec:gap-mechanism};
Figure~\ref{fig:perclass-main} shows the per-class impact.

\begin{figure}[t]
  \centering
  \includegraphics[width=\textwidth]{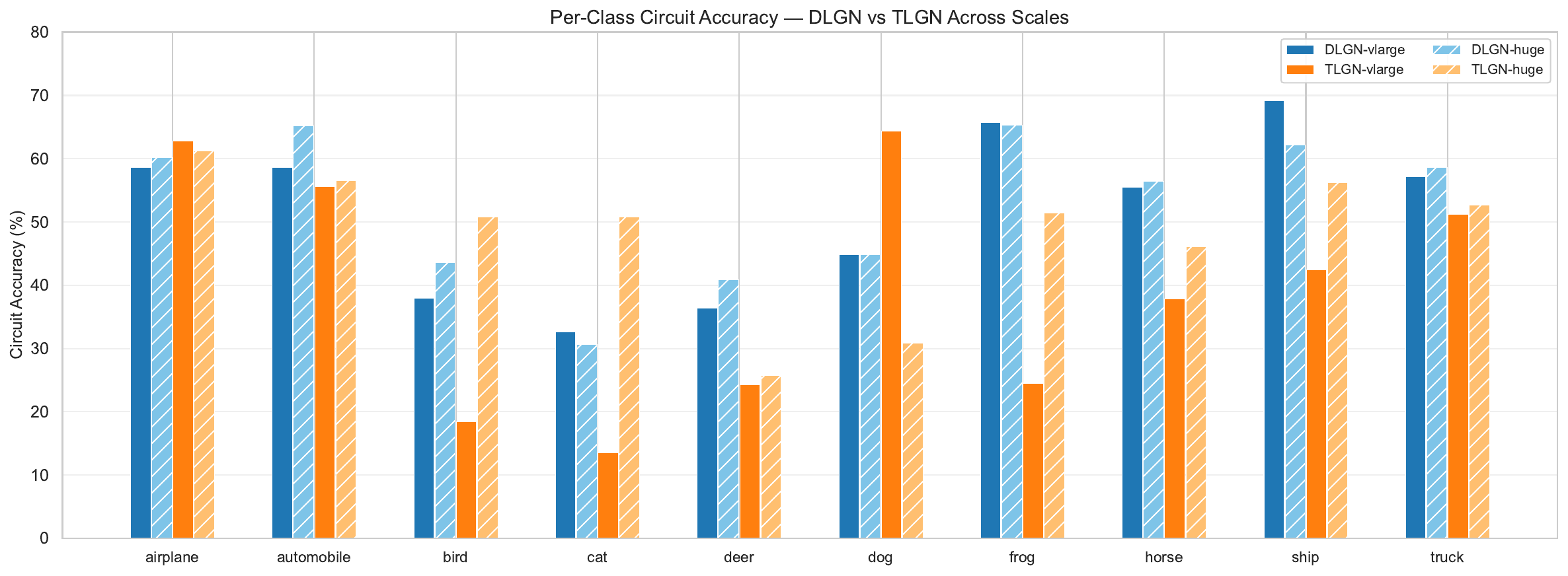}
  \caption{\textbf{Per-class circuit accuracy (\scvlarge{} and \schuge{}).}
    Blue: \dlgn{}. Orange: \tlgn{}. Solid: \scvlarge{}. Hatched: \schuge{}.
    \tlgn{}-\schuge{} recovers on visually complex classes (bird $+32.3$\,pp,
    cat $+37.2$\,pp, frog $+26.9$\,pp vs.\ \scvlarge{}), surpassing \dlgn{}
    on bird and cat.}
  \label{fig:perclass-main}
\end{figure}

\paragraph{TLGNs train 2-3$\times$ faster.}
\pst{} evaluates one polynomial per neuron (9 multiply-adds) vs.\ DLGN's softmax over 16 gates, yielding 1.5$\times$ to 3.1$\times$ speedup across scales.
PST matches binary DLGN soft accuracy while producing circuits whose hardening gap contracts with scale.

\paragraph{Takeaway.}
\pst{} is a viable and efficient training regime for ternary logic gate networks.
It matches binary \dlgn{} soft accuracy, trains 2--3$\times$ faster, and
produces circuits whose hardening gap contracts with scale. The intrinsic
advantages of ternary logic (principled abstention via $U$ outputs, a
$500\times$ richer gate vocabulary, and three-valued expressiveness) are
examined in the following subsection.

\subsection{Salient Features of Ternary LGNs}
\label{dt-lgn:subsec::results-ternary}

Binary logic gate networks (\dlgn{}) must always commit to a classification:
every neuron outputs $\{0, 1\}$, and the GroupSum aggregation always produces a
decisive class score. Ternary logic gate networks (\tlgn{}), by contrast,
operate in $\{-1, 0, +1\}$, where zero propagates through truth-table lookup
as a first-class signal. This three-valued logic endows the deployed circuit
with principled abstention, selective prediction, and Bayes-tracking
uncertainty. These capabilities have no analog in binary DLGNs and are only
accessible via \pst{}, the first training regime that makes ternary gate
networks trainable.

We validate these properties on five 2D synthetic datasets (Moons, Circles,
Spirals, Gaussians, Ring Sector; $n{=}2{,}000$ train, $500$ test) using
matched architectures: body $[512]^3$, output${}=200$,
GroupSum($k{=}2$, $\tau{=}10.0$), resolution-4 encoding ($K{=}3$, input dim${}=6$).
Both architectures are trained for 5,000 steps with discrete inputs.

\subsubsection{The $U$ signal: principled abstention absent in binary DLGNs}
\label{sec:td-contrast}



Table~\ref{tab:combined} presents the head-to-head comparison.
Binary \dlgn{} achieves higher raw accuracy on four of five datasets ($+4.7$ to $+18.8$\,pp), but this comparison is misleading: the ternary circuit deliberately abstains on ambiguous inputs via its $U$ mechanism, concentrating zero-valued outputs at decision boundaries (Figure~\ref{fig:highlight}).
Under margin-based selective prediction, the ternary circuit surpasses binary full-coverage accuracy: at 50\% coverage, Moons achieves 98.1\% (vs.\ binary's 91.8\%), Gaussians 99.5\%, and Ring Sector 100.0\%.
This demonstrates that ternary circuits provide more reliable predictions on inputs they choose to classify, a capability unavailable to binary circuits (see Appendix~\ref{sec:td-boundaries}--\ref{sec:td-coverage} for full decision boundary gallery and accuracy-vs-coverage curves).

\begin{table}[t]
  \centering
  \caption{\textbf{Ternary vs.\ binary: accuracy, abstention, and selective prediction.}
    $\Delta$: ternary $-$ binary accuracy. UNK\%: fraction of ternary output
    neurons producing zero. Acc@$k$\%: ternary accuracy when retaining
    only the $k$\% most confident samples (binary has no analogous mechanism).}
  \label{tab:combined}
  \small
  \begin{tabular}{@{}lccc|cccc@{}}
    \toprule
    & \multicolumn{3}{c|}{Raw Accuracy} & \multicolumn{4}{c}{Ternary Selective Prediction} \\
    Dataset & Bin & Tern & $\Delta$ & UNK\% & Acc@90\% & Acc@50\% & AUC \\
    \midrule
    Moons       & 91.8\% & 85.8\% & $-6.0$  & 52.9\% & 90.2\% & \textbf{98.1\%} & $-0.955$ \\
    Circles     & 97.0\% & 78.2\% & $-18.8$ & 45.1\% & 76.7\% & 84.9\% & $-0.864$ \\
    Spirals     & 63.2\% & 64.2\% & $+1.0$  & 50.1\% & 65.4\% & 78.2\% & $-0.790$ \\
    Gaussians   & 90.5\% & 78.8\% & $-11.8$ & 50.6\% & 78.8\% & \textbf{99.5\%} & $-0.846$ \\
    Ring Sector & 97.0\% & 92.2\% & $-4.7$  & 38.3\% & 96.9\% & \textbf{100.0\%}& $-0.986$ \\
    \bottomrule
  \end{tabular}
\end{table}

\begin{figure}[t]
  \centering
  \includegraphics[width=\textwidth]{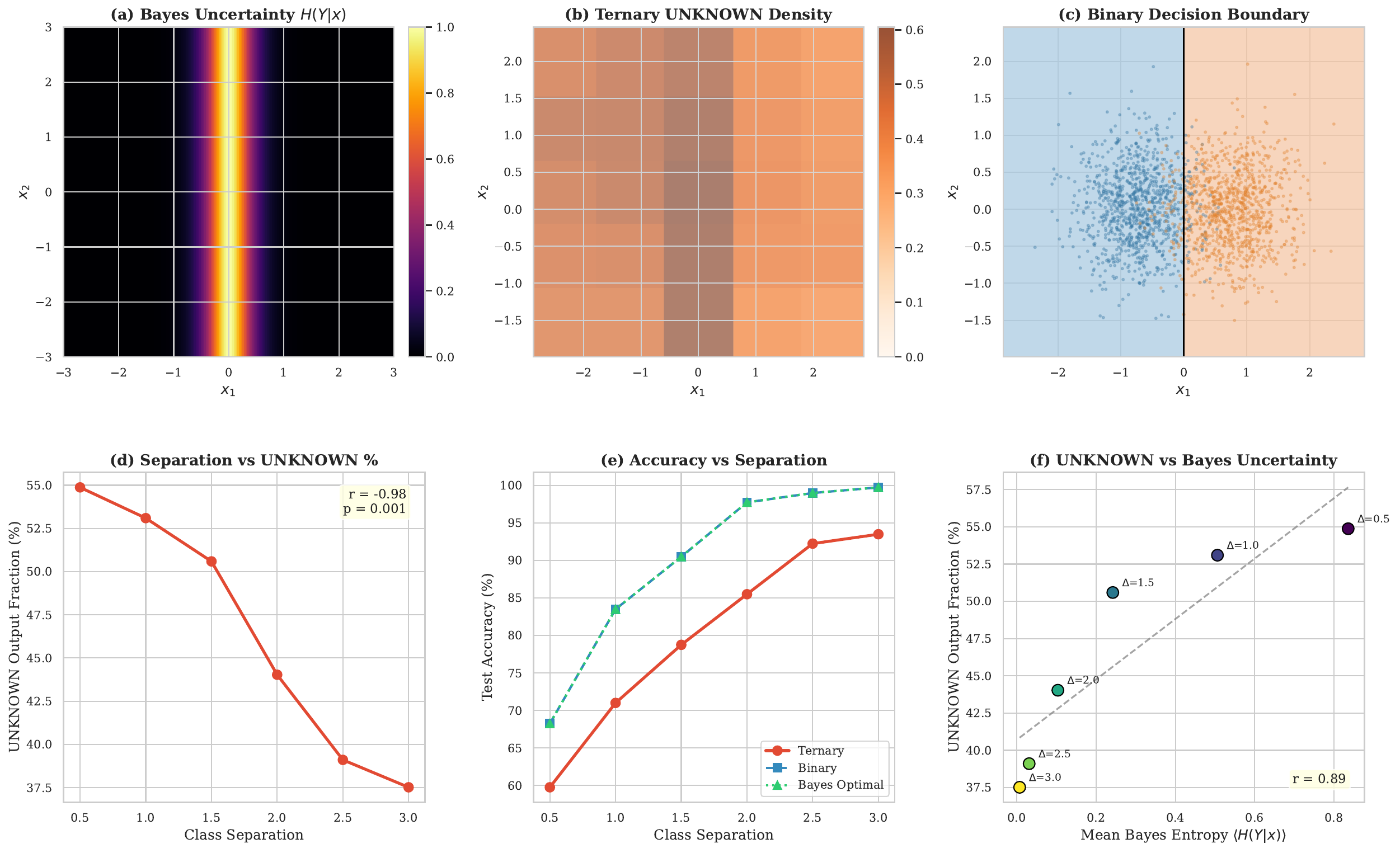}
  \caption{\textbf{Ternary $U$ tracks Bayes-optimal uncertainty.}
    Top row: Bayes posterior entropy, ternary $U$ density, and their
    overlay for Gaussians (separation${}=1.5$). Bottom row: as class
    separation increases from 0.5 to 3.0$\sigma$, $U$ fraction and
    Bayes error decrease in lockstep. Binary accuracy (red) matches the
    Bayes rate at every separation, confirming that neither architecture
    is capacity-limited; the ternary circuit \emph{chooses} to
    abstain where the posterior is ambiguous.}
  \label{fig:highlight}
\end{figure}

\paragraph{$U$ as a Bayes-optimal uncertainty proxy}
\label{sec:td-bayes}

To confirm that the $U$ signal reflects genuine statistical uncertainty
rather than training failure, we parametrically vary the difficulty of the
Gaussian classification task (Appendix Table~\ref{tab:separation-sweep}).
As class separation increases from 0.5$\sigma$ to 3.0$\sigma$, three quantities
decrease together: the Bayes error, the ternary $U$ fraction, and the
spatial overlap between $U$ density and the Bayes entropy $H(y \mid x)$
(Figure~\ref{fig:highlight}). The binary baseline matches the Bayes rate at
every separation level, confirming that these tasks are within both
architectures' capacity. The ternary circuit trades coverage for accuracy
in precisely the regions where the Bayes posterior is ambiguous, a behavior
that emerges naturally from the three-valued signal domain without any
explicit calibration objective.

\subsubsection{PST delivers ternary circuits with near-zero hardening gap}
\label{sec:td-fidelity}

The practical value of these ternary-specific features depends on whether
\pst{} can faithfully transfer the trained behavior to the deployed circuit.
Table~\ref{tab:combined} includes the hardening gap
($|\text{soft} - \text{circuit}|$) for both architectures:
the binary gap is 0.00\% on four of five datasets (3.19\% on Spirals);
the ternary gap is 0.00\% on Moons, 0.31\% on Gaussians, and 0.38\%
on Ring Sector. These near-zero gaps confirm that the polynomial surrogate
training regime, including quadratic lambda annealing and ternary commitment
regularization, produces circuits that preserve the training-time
abstention behavior after hardening.
The elevated ternary gaps on Circles (8.31\%) and Spirals (4.31\%) are
consistent with the narrow hardening basins identified in the loss landscape
analysis (Appendix Figure~\ref{fig:hardening-gap}).

  \section{Conclusion and Future Work}
\label{dt-lgn:sec::conclusion}

We introduced Polynomial Surrogate Training (PST), which makes ternary logic gate networks practical by parameterizing the full 19,683-gate vocabulary with just 9 coefficients per neuron.
PST provides provable bounds on per-neuron discretization error through a data-independent commitment loss and trains ternary circuits 2--3$\times$ faster than binary DLGNs.
Ternary logic's key advantage is principled abstention via \UNKNOWN{} outputs, which enables selective prediction where circuits surpass binary accuracy by filtering low-confidence predictions: ternary circuits achieve 98.1\% accuracy at 50\% coverage (vs.\ 91.8\% binary full-coverage accuracy on Moons), with \UNKNOWN{} density tracking Bayes-optimal uncertainty across varying task difficulty, demonstrating that three-valued logic is a practical mechanism for uncertainty-aware inference in deployed circuits.

Two directions warrant immediate attention: (a) adapting straight-through estimation and Gumbel-noise regularization~\citep{bengio2013estimating,yousefi2025mind} to PST's polynomial regime may further narrow the network-level hardening gap that contracts empirically with scale but lacks theoretical guarantees; and, (b) extending PST to recurrent architectures would enable online temporal monitoring with three-valued logic, as Signal Temporal Logic (STL) specifications inherently require ternary verdicts (\TRUE{}, \FALSE{}, \UNKNOWN{}) over finite observation windows, making recurrent DTLGNs a natural fit for runtime verification in safety-critical systems where partial observations demand principled abstention rather than forced binary classification.

  \bibliography{bib_difflogic_25}

  \clearpage
  \appendix
  \section{Hardening Gap: Lattice Geometry}
\label{appendix:lattice}

\subsection{Truth-Table Lattice Quantization}

Both the softmax-over-gates and PST training regimes face the same geometric problem: after training, each neuron must be assigned one of finitely many discrete gates.
We formalize this as a quantization problem on a truth-table lattice.

\begin{definition}[Truth-Table Lattice]
\label{def:tt-lattice-app}
Let $\mathcal{Q} = \{v_0, v_1, \ldots, v_{q-1}\} \subset \R$ be an equally spaced set of $q$ truth values.
The \emph{truth-table lattice} $\Lambda_q = \mathcal{Q}^{q^2}$ is the set of all valid truth tables for two-input $q$-logic gates, embedded in $\R^{q^2}$.
The lattice has $|\Lambda_q| = q^{q^2}$ points.
\end{definition}

For the cases of interest: Boolean logic has $q = 2$, $|\Lambda_2| = 2^4 = 16$ gates in $\R^4$; ternary logic has $q = 3$, $|\Lambda_3| = 3^9 = 19{,}683$ gates in $\R^9$; and quaternary logic has $q = 4$, $|\Lambda_4| = 4^{16} \approx 4.3 \times 10^9$ gates in $\R^{16}$.

Hardening maps each neuron's continuous output (a point in $\R^{q^2}$) to the nearest lattice point in $\Lambda_q$.
The per-neuron \emph{hardening error} is the $\ell_2$ distance from the neuron's soft truth table to the nearest valid gate.
The key geometric quantity is the \emph{covering radius} of $\Lambda_q$: the maximum distance from any point in the ambient space to the nearest lattice point.

\begin{proposition}[Covering Radius]
\label{prop:covering-radius-app}
The covering radius of the lattice $\Lambda_q$ (with truth values uniformly spaced by $\Delta = 2/(q-1)$ in $[-1,1]$) satisfies
\[
    r_{\mathrm{cov}}(\Lambda_q) = \frac{q}{q-1},
\]
and the maximum per-entry rounding error is
\[
    \epsilon_q = \frac{1}{q-1}.
\]
\end{proposition}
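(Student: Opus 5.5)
The plan is to reduce everything to one coordinate and use the product structure of $\Lambda_q = \mathcal{Q}^{q^2}$. First I would fix the reading of ``ambient space''. Taken literally over all of $\R^{q^2}$ the covering radius is infinite, since $\Lambda_q$ is a finite set. So I will interpret the statement as the covering radius relative to the hypercube $[-1,1]^{q^2}$. This is the region where soft truth tables live, because of the $\clip$ nonlinearity and because the truth values lie in $[-1,1]$. Concretely, the quantity to compute is
\[
r_{\mathrm{cov}}(\Lambda_q)=\sup_{\mathbf{t}\in[-1,1]^{q^2}}\ \min_{\bar{\mathbf{t}}\in\Lambda_q}\|\mathbf{t}-\bar{\mathbf{t}}\|_2 .
\]
Making this restriction explicit is the main conceptual obstacle; once it is fixed, the rest is elementary.

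\emph{Step 1 (one entry).} Let $\mathcal{Q}=\{-1,-1+\Delta,\dots,1\}$ with $\Delta=2/(q-1)$. These points split $[-1,1]$ into $q-1$ intervals of length $\Delta$. For $x\in[-1,1]$ lying in such an interval $[v_i,v_{i+1}]$, the distance to the nearest element of $\mathcal{Q}$ is $\min(x-v_i,\,v_{i+1}-x)\le \Delta/2$. Equality holds exactly at the midpoint of the interval. Hence $\sup_{x\in[-1,1]}\operatorname{dist}(x,\mathcal{Q})=\Delta/2=1/(q-1)$, which gives $\epsilon_q$.

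\emph{Step 2 (separability).} Because $\Lambda_q$ is a Cartesian product, minimizing the squared $\ell_2$ distance splits into independent coordinate problems:
\[
\min_{\bar{\mathbf{t}}\in\Lambda_q}\|\mathbf{t}-\bar{\mathbf{t}}\|_2^2=\sum_{k=1}^{q^2}\operatorname{dist}(t_k,\mathcal{Q})^2 .
\]
This is the same identity that underlies Theorem~\ref{thm:pst-gap}. It also shows that the nearest lattice point is the entrywise rounding of Algorithm~\ref{alg:hardening}.

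\emph{Step 3 (maximize).} Each summand is bounded by $\epsilon_q^2$ by Step~1, so the squared distance is at most $q^2\epsilon_q^2$. The supremum is attained by taking every coordinate at an interval midpoint, for example $t_k=-1+\Delta/2$ for all $k$. Taking square roots gives $r_{\mathrm{cov}}(\Lambda_q)=q\,\epsilon_q=q/(q-1)$.

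As a sanity check, the binary case gives $r_{\mathrm{cov}}=2$, attained at the origin of $\R^4$. The ternary case gives $3/2$ in $\R^9$, with per-entry tolerance $1/2$; this matches the $O(1/q)$ rounding tolerance cited in the remark after Theorem~\ref{thm:pst-gap}.
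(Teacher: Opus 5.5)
Your proof is correct and follows the paper's route. You bound the worst-case per-entry rounding error by $\Delta/2 = 1/(q-1)$, then use the product structure $\Lambda_q = \mathcal{Q}^{q^2}$ to get $\sqrt{q^2(\Delta/2)^2} = q/(q-1)$; your explicit restriction of the supremum to $[-1,1]^{q^2}$ (over all of $\R^{q^2}$ the covering radius of a finite set is infinite) and the attainment at interval midpoints are rigor the paper leaves implicit. One caveat on that restriction: Algorithm~\ref{alg:hardening} rounds the unclipped values $\mathbf{V}\mathbf{w}_j$, so clipping motivates the restriction but does not guarantee that soft truth tables actually lie in $[-1,1]^{q^2}$.
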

\begin{proof}
Each coordinate of the truth table independently rounds to the nearest value in $\mathcal{Q}$.
With $q$ uniformly spaced values in $[-1, 1]$, consecutive values are separated by $\Delta = 2/(q-1)$.
The worst-case per-coordinate error is $\Delta/2 = 1/(q-1)$.
The $\ell_2$ covering radius over all $q^2$ coordinates is $\sqrt{q^2 \cdot (\Delta/2)^2} = q \cdot \Delta/2 = q/(q-1)$.
\end{proof}

For ternary logic ($q = 3$), $\epsilon_3 = 1/2 = 0.5$; for quaternary ($q = 4$), $\epsilon_4 = 1/3 \approx 0.333$.
As $q$ increases, each coordinate need only be accurate to within $O(1/q)$ to snap to the correct gate entry.

\begin{proposition}[PST Architectural Scaling]
\label{prop:pst-scaling-app}
A $q$-logic PST neuron over two inputs requires $q^2$ learnable coefficients.
The maximum per-entry rounding error at hardening time is $\epsilon_q = 1/(q-1) = O(1/q)$.
Therefore, a higher-valence PST architecture achieves tighter rounding tolerance per truth-table entry at a cost that grows only quadratically in $q$.
By contrast, a softmax-over-gates neuron for $q$-ary logic requires $K = q^{q^2}$ logits (super-exponential in $q$), and its rounding error is governed by the concentration of a categorical distribution over $K$ categories.
\end{proposition}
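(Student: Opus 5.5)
The claim has three parts: the number of coefficients of a $q$-logic PST neuron, the per-entry rounding tolerance, and the comparison with softmax-over-gates. I would handle them in that order, since each follows directly from earlier material.

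\textbf{Coefficient count.} First I extend the reduction argument from the ternary case to general $q$. On a set $\mathcal{Q}$ of $q$ distinct reals, every power $z^k$ with $k\ge q$ reduces to a polynomial of degree at most $q-1$. The reason is that $\prod_{v\in\mathcal{Q}}(z-v)$ vanishes identically on $\mathcal{Q}$. Hence every function $\mathcal{Q}^2\to\R$ is represented by a polynomial of degree at most $q-1$ in each variable, with monomial basis $\{a^i b^j\}_{0\le i,j\le q-1}$ of size $q^2$.

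To make the parameterization universal, as in the ternary case, I need the $q^2\times q^2$ evaluation matrix $\mathbf{V}$ on $\mathcal{Q}^2$ to be invertible. I would show this by writing $\mathbf{V}=V_1\otimes V_1$, where $V_1$ is the univariate $q\times q$ Vandermonde matrix on $\mathcal{Q}$. Then $\det V_1=\prod_{i<j}(v_j-v_i)\neq 0$, and the Kronecker product of invertible matrices is invertible. So exactly $q^2$ coefficients are both necessary and sufficient: the space of functions $\mathcal{Q}^2\to\R$ has dimension $q^2$. For $q=3$ this recovers the 9 coefficients of Eq.~\eqref{eq:pst-neuron}.

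\textbf{Rounding tolerance.} This I would take directly from Proposition~\ref{prop:covering-radius-app}. Hardening (Algorithm~\ref{alg:hardening}) rounds each entry of $\mathbf{t}_j=\mathbf{V}\mathbf{w}_j$ independently to the nearest point of $\mathcal{Q}$. Once the entry is clipped to $[-1,1]$, its error is at most half the spacing, $\Delta/2=1/(q-1)=O(1/q)$. The claimed per-entry bound $\epsilon_q$ follows, and it decreases monotonically in $q$.

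\textbf{Cost comparison.} The softmax neuron needs one logit per gate, and the number of gates is $|\Lambda_q|=q^{q^2}$ (Definition~\ref{def:tt-lattice-app}). Comparing $q^2$ with $q^{q^2}$ gives the quadratic-versus-super-exponential separation. The statement that the softmax rounding error is governed by categorical concentration is descriptive rather than quantitative. I would justify it by noting that the hard output $g_{k^*}$ equals the soft output $\sum_k p_k g_k$ only when $p_{k^*}=1$. The per-entry discrepancy is bounded by $(1-p_{k^*})\cdot\max|g_k-g_{k^*}|\le 2(1-p_{k^*})$.

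\textbf{Main obstacle.} The only real subtlety is the phrase ``tighter rounding tolerance''. The per-entry error $1/(q-1)$ shrinks, but the $\ell_2$ covering radius $q/(q-1)$ does not. I would therefore state the conclusion strictly per entry, and remark that the aggregate normalized quantity $\frac{1}{q^2}\|\mathbf{t}-\bar{\mathbf{t}}\|_2^2\le 1/(q-1)^2$ also decreases. This keeps the claim consistent with Theorem~\ref{thm:pst-gap}.
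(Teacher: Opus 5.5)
Your proposal is correct and follows the same route the paper relies on. The paper states this proposition without a separate proof and leans on three things: the $q^2$-term monomial basis generalizing Eq.~\eqref{eq:pst-neuron}, the per-entry bound from Proposition~\ref{prop:covering-radius-app}, and the count $K=|\Lambda_q|=q^{q^2}$. Your Kronecker--Vandermonde invertibility argument makes explicit what the paper leaves implicit. So does your clipping caveat, which is needed because Algorithm~\ref{alg:hardening} rounds unclipped values, and the bound $1/(q-1)$ only holds for entries in $[-1,1]$.

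Two side remarks in your proposal contain small slips, though neither affects the argument. First, $q/(q-1)=1+1/(q-1)$ does shrink with $q$; it just tends to $1$ rather than $0$. Second, for $q\ge 3$ the soft softmax output can equal $g_{k^*}$ even when $p_{k^*}<1$, because the all-zero gate is the midpoint of the all-$(+1)$ and all-$(-1)$ gates. Your bound $2(1-p_{k^*})$ is unaffected by this.
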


PST's design improves with logic valence: the polynomial representation cost grows polynomially ($q^2$ coefficients) while the lattice becomes denser ($\epsilon_q \to 0$).
The softmax-over-gates regime faces the opposite situation: its parameterization cost explodes super-exponentially while the concentration problem worsens.

\section{Spectral Analysis of Ternary Logic Gate Networks}
\label{appendix:spectral}

\noindent \textbf{Computing the Orthonormal Basis.} We have the ternary set $\mathcal{T}=\{-1,0,+1\}$ and want to represent every function $f: \mathcal{T}^2 \rightarrow \mathbb{R}$ as a linear combination of basis functions. Since $\left|\mathcal{T}^2\right|=9$, the space of such functions is $\mathbb{R}^9-$ nine-dimensional. We need exactly 9 basis functions, and we want them orthogonal under some inner product.

\noindent \textbf{The Inner Product.} The uniform measure on $\mathcal{T}$ assigns probability $1 / 3$ to each value. The inner product on functions $f, g: \mathcal{T} \rightarrow \mathbb{R}$ is:

\begin{equation*}
\langle f, g\rangle=\frac{1}{3} \sum_{x \in \mathcal{T}} f(x) g(x)=\frac{1}{3}[f(-1) g(-1)+f(0) g(0)+f(1) g(1)]
\end{equation*}

\noindent Each ternary input is equally weighted. The factor $1 / 3$ is a normalization - it makes $\langle 1,1\rangle =1$ so the constant function has unit norm. The bivariate inner product on $\mathcal{T}^2$ uses the same idea with factor 1/9.

\noindent \textbf{The Monomial Basis.} The monomials $\left\{1, x, x^2\right\}$ are a valid basis for polynomials of degree $\leq 2$ in one variable, and they span the right space (any function $\mathcal{T} \rightarrow \mathbb{R}$ is determined by 3 values, and 3 monomials give 3 degrees of freedom). But they are not orthogonal under this inner product:

\begin{equation*}
\left\langle 1, x^2\right\rangle=\frac{1}{3}[1 \cdot 1+1 \cdot 0+1 \cdot 1]=\frac{2}{3} \neq 0
\end{equation*}

\noindent The constant function and $x^2$ are correlated on $\mathcal{T}$ because $x^2$ evaluates to 1 at both $\pm 1$ and 0 at 0 . Under the uniform measure, $x^2$ has mean $2 / 3$ - it is biased toward 1 because two out of three ternary inputs are extreme ( $\pm 1$ ) and only one is neutral ( 0 ). Since $x^2$ is nonzero at two-thirds of the domain, its mean is nonzero, which means it is not orthogonal to the constant.
The monomial basis is fine for computation (and is what we use for the weight parameterization), but it is unsuitable for analysis because the coefficients are entangled and we cannot ascribe meaning to the values of these coefficients like we can with the Fourier coefficients.

\noindent \textbf{Gram-Schmidt Orthogonalization.} We generate the orthonormal basis from the monomial basis by following the Gram-Schmidt process:

\noindent Step 1: $\varphi_0(x)=1$. Norm: $\left\|\varphi_0\right\|^2=\langle 1,1\rangle=1$.

\noindent Step 2: Start with x , subtract its projection onto $\varphi_0$.

\begin{equation*}
\left\langle x, \varphi_0\right\rangle=\frac{1}{3}(-1 \cdot 1+0 \cdot 1+1 \cdot 1)=0
\end{equation*}

\noindent The projection is zero; x is already orthogonal to the constant. This happens because $x$ is an odd function and the measure is symmetric around 0. So $\varphi_1(x)=x$, with $\left\|\varphi_1\right\|^2==2 / 3$.

\noindent Step 3: Start with $x^2$, subtract projections onto $\varphi_0$ and $\varphi_1$.

\begin{equation*}
\left\langle x^2, \varphi_1\right\rangle=\frac{1}{3}\left((-1)^2(-1)+0^2 \cdot 0+1^2 \cdot 1\right)=0
\end{equation*}

\noindent Zero by symmetry $-x^2$ is even, $x$ is odd, their product is odd, and odd functions sum to zero on $\mathcal{T}$. So we only need to subtract the $\varphi_0$ projection:

\begin{equation*}
\begin{gathered}
\left\langle x^2, \varphi_0\right\rangle=\frac{1}{3}(1+0+1)=\frac{2}{3} \\
\varphi_2(x)=x^2-\frac{\left\langle x^2, \varphi_0\right\rangle}{\left\|\varphi_0\right\|^2} \cdot \varphi_0=x^2-\frac{2 / 3}{1} \cdot 1=x^2-\frac{2}{3}
\end{gathered}
\end{equation*}

\noindent The $-2 / 3$ is not arbitrary - it is the mean of $x^2$ under the uniform measure on $\mathcal{T}$. The polynomial $\varphi_2(x)=x^2-2 / 3$ is the ``centered'' quadratic: it measures whether $x$ is extreme $\left( \pm 1\right.$, where $\left.\varphi_2=1 / 3\right)$ versus neutral $\left(0\right.$, where $\left.\varphi_2=-2 / 3\right)$. It has zero mean on $\mathcal{T}$ by construction with $\left\|\varphi_2\right\|^2=2 / 9$. In Boolean analysis on $\{-1,+1\}$, every function of one variable is a linear combination of 1 and x - there are only two basis functions because $|\{-1,+1\}|=2$. The quadratic $\varphi_2$ is the "extra dimension" that ternary logic provides, and it captures exactly the UNKNOWN-sensitivity that makes Kleene K3 logic different from Boolean logic.

 \begin{remark}
     Our basis differs from the additive-combinatorial convention in two ways: (i) we use \emph{real-valued} orthogonal polynomials rather than complex characters, matching the real polynomial parameterization of PST neurons; and (ii) we construct the basis via Gram-Schmidt from $\{1, x, x^2\}$ rather than from the group characters of $\mathbb{Z}_3$, yielding a basis that is adapted to the Kleene interpretation where $\{-1, +1\}$ are ``decided'' values and $0$ is ``undetermined.''
    The resulting quadratic basis function $\varphi_2 = x^2 - 2/3$ is orthogonal to both the constant and linear terms and captures precisely the sensitivity to \UNKNOWN{} that has no analogue in Boolean Fourier analysis.
 \end{remark}

\noindent The full list of the Fourier coefficients is as follows: $\Phi_{00} =1, \Phi_{10} =x, \Phi_{01} =y, \Phi_{11} =xy, \Phi_{20} =x^2-2/3, \Phi_{02} =y^2-2/3, \Phi_{21} =(x^2-2/3)y, \Phi_{12} =x(y^2-2/3), \Phi_{22} =(x^2-2/3)(y^2-2/3)$.\\

\paragraph{Spectral complexity classes.}
The Fourier spectrum naturally stratifies the 19,683 ternary gates into complexity classes.
\textbf{Linear gates} have energy only in $\hat{f}_{00}, \hat{f}_{10}, \hat{f}_{01}$. These include MAJORITY and the trivial pass-through gates, and they are the only gates that can be computed by a single linear threshold on ternary inputs.
\textbf{Bilinear gates} additionally use the interaction term $\hat{f}_{11}$; AND, OR, XOR, and their variants fall here.
\textbf{Quadratic gates} involve $\hat{f}_{20}$ or $\hat{f}_{02}$ and are the genuinely ``ternary'' gates that distinguish between decided and undecided inputs.
\textbf{Full gates} use all nine terms.
The Fourier $L_1$ norm $\|\hat{f}\|_1 = \sum_{ij} |\hat{f}_{ij}|$ is a rigorous complexity measure: it upper-bounds the number of random examples needed to learn the gate to constant accuracy, and it determines the gate's sensitivity to random input perturbations.

\paragraph{Fourier sparsity regularization.}
The $L_1$ norm of Fourier coefficients serves as a sparsity regularizer independent of the commitment loss:
\begin{equation}
\label{eq:fourier-reg}
\mathcal{R}_F(\mathbf{W}) = \frac{1}{N}\sum_{j=1}^{N} \sum_{i,k} |\hat{f}_{ik}^{(j)}|.
\end{equation}
The commitment loss $\mathcal{R}_A$ drives neurons toward \emph{any} of the 19,683 gates; the Fourier regularizer biases that choice toward spectrally sparse, interpretable gates.
The combined training objective is $\loss = \loss_{\mathrm{task}} + \lambda(t) \cdot \mathcal{R}_A + \beta \cdot \mathcal{R}_F$.

\paragraph{Post-hoc spectral profiling.}
After training and discretization, the aggregate spectral energy distribution across all neurons reveals whether the task genuinely exploits ternary structure.
If most energy concentrates in linear Fourier terms ($\hat{f}_{10}$, $\hat{f}_{01}$, $\hat{f}_{11}$), the network is essentially performing sign propagation and could function as a Boolean circuit.
If substantial energy appears in quadratic terms ($\hat{f}_{20}$, $\hat{f}_{02}$, $\hat{f}_{22}$), the network is genuinely exploiting the three-valued distinction, routing information differently based on whether inputs are decided versus undetermined.
This spectral profile is a principled diagnostic for whether $K_3$ logic adds value over binary logic for a given task.

\paragraph{Vocabulary coverage.}
The Hamming distance between a neuron's discretized truth table and the nearest curated gate quantifies vocabulary coverage.
The Level~2 coverage (the fraction of neurons whose nearest curated gate has Hamming distance zero) reveals which tasks are ``natively ternary'' (high coverage) versus tasks requiring gates outside the standard vocabulary. One of the more interesting, formally defined vocabularies of ternary gates is that of \emph{threshold logic} \citep{jones2012ternary}. In threshold logic, each input $i_j$ is multiplied by a constant $k_j$ and then the inputs are summed and then compared to two thresholds $t_+$ and $t_–$. If the sum is $t_+$ or greater, the output is $+1$. If the sum is $t_–$, the output is $–1$. According to \cite{merrill1965ternary}, 471 of the 19,683 diadic ternary operators can be computed using threshold logic.

\section{Additional Experimental Results}
\label{appendix:expts_results}
\subsection{Supplementary: CIFAR-10 scaling details}
\label{sec:cifar-appendix}

\begin{table}[t]
  \centering
  \caption{\textbf{Architecture configurations.} All use GroupSum($k{=}10$,
    $\tau{=}33.3$) for 10-class output. Input dimension: 9,216.}
  \label{tab:configs}
  \small
  \begin{tabular}{@{}llrrr@{}}
    \toprule
    Scale & Widths & Neurons & \pst{} LR & Steps \\
    \midrule
    \scsmall{}   & $[12000,\;12000,\;12000,\;12000]$ &  48,000 & 0.003 & 100K \\
    \scmedium{}  & $[24000,\;24000,\;24000,\;24000]$ &  96,000 & 0.001 & 150K \\
    \sclarge{}   & $[36000,\;36000,\;36000,\;36000]$ & 144,000 & 0.001 & 200K \\
    \scdeeper{}  & $[12000,\;12000,\;12000,\;12000,\;12000]$ &  60,000 & 0.003 & 100K \\
    \scvlarge{}  & $[48000,\;48000,\;48000,\;48000]$ & 192,000 & 0.001 & 150K \\
    \schuge{}    & $[128000,\;128000,\;128000,\;128000]$& 512,000 & 0.001 & 200K \\
    \bottomrule
  \end{tabular}
\end{table}

\subsubsection{Training dynamics}
\label{sec:training}

\textbf{Base scales (48K-144K).}
Figure~\ref{fig:nb07-loss} shows training loss curves for four scales.
At \scsmall{} and \scmedium{} scales, \dlgn{} enters a noisy plateau
(oscillating in 0.5-1.4), while \pst{} achieves $10$-$38\times$ lower task
loss (e.g., \tlgn{}-\sclarge{}: 0.017 vs.\ \dlgn{}-\sclarge{}: 0.65). The
5-layer \scdeeper{} variant shows the same plateau as \scsmall{}, confirming
that depth alone does not resolve it.

\begin{figure}[t]
  \centering
  \includegraphics[width=\textwidth]{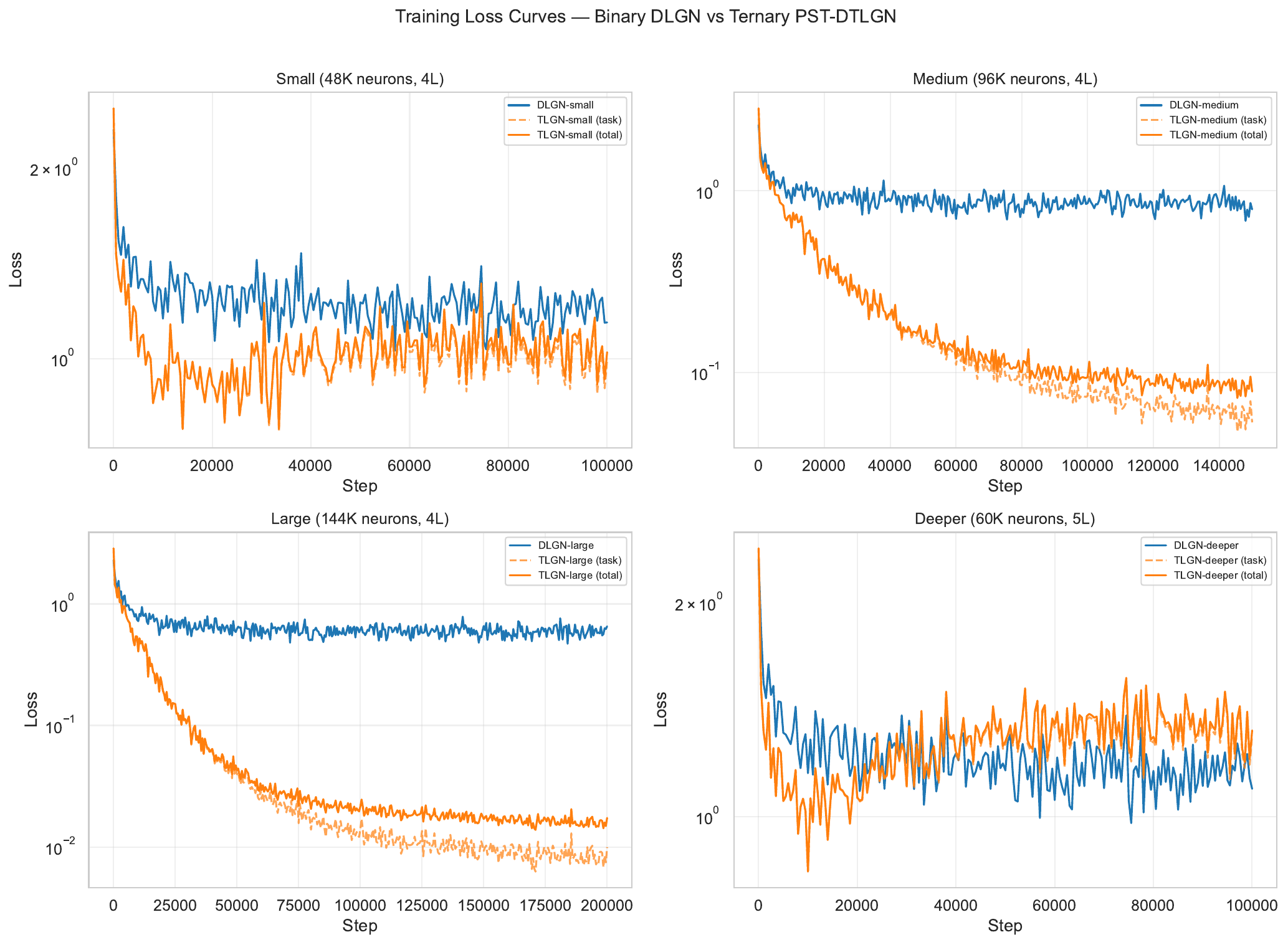}
  \caption{\textbf{Training loss curves (\scsmall{}--\sclarge{} + \scdeeper{}).}
    $2 \times 2$ grid, log-scale $y$-axis. Blue: \dlgn{}. Dashed orange:
    \tlgn{} task loss. Solid orange: \tlgn{} total loss (task + $\lambda \cdot$ commitment).
    \tlgn{} converges 1-2 orders of magnitude below \dlgn{} at \scmedium{} and
    \sclarge{} scales, while both remain comparable at \scsmall{} scale.}
  \label{fig:nb07-loss}
\end{figure}

\textbf{Extended scales (192K-512K).}
Figure~\ref{fig:nb07a-loss} extends to \scvlarge{} and \schuge{}.
\dlgn{}-\schuge{} (512K) breaks through to 0.015, a qualitative transition
from plateau-dominated to clean convergence, suggesting the softmax-over-gates
parameterization requires a threshold overparameterization level (between
48K and 128K per-layer width). \tlgn{}-\schuge{} reaches 0.004, the lowest task
loss of any model.

\begin{figure}[t]
  \centering
  \includegraphics[width=\textwidth]{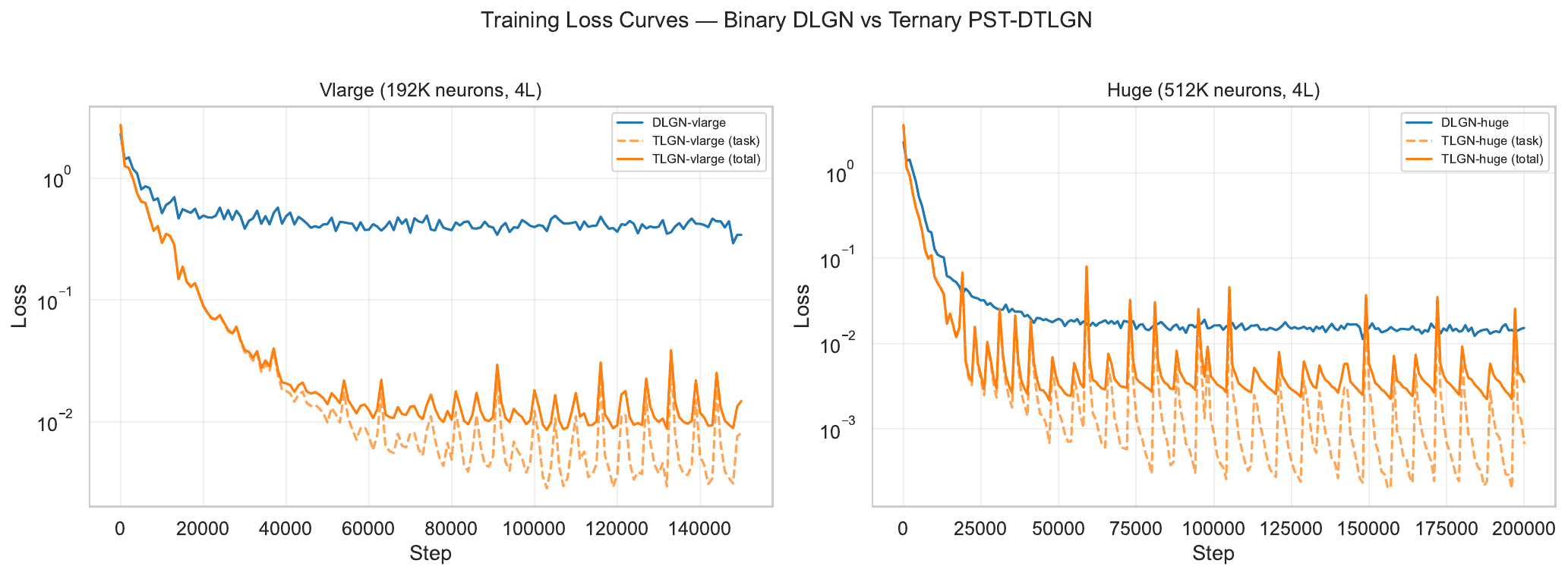}
  \caption{\textbf{Training loss curves (\scvlarge{} and \schuge{}).}
    Left: \scvlarge{} (192K neurons). Right: \schuge{} (512K neurons).
    At \schuge{} scale, \dlgn{} finally achieves clean convergence (0.015),
    eliminating the noisy-plateau pathology. \tlgn{}-\schuge{} reaches 0.004
    (task), the lowest of any model.}
  \label{fig:nb07a-loss}
\end{figure}

\textbf{Ternary convergence.}
Figure~\ref{fig:ternary-overlay} overlays \tlgn{} task loss curves for
\scvlarge{} and \schuge{}. \tlgn{}-\schuge{} reaches its floor
(${\sim}10^{-3}$) by step 30K, matching \tlgn{}-\scvlarge{}'s asymptotic
performance in 15\% of the training budget. Periodic spikes coincide with lambda
annealing ramp-up; these are transient (recovery within 1-2K steps) and do not
affect final performance.

\begin{figure}[t]
  \centering
  \includegraphics[width=0.85\textwidth]{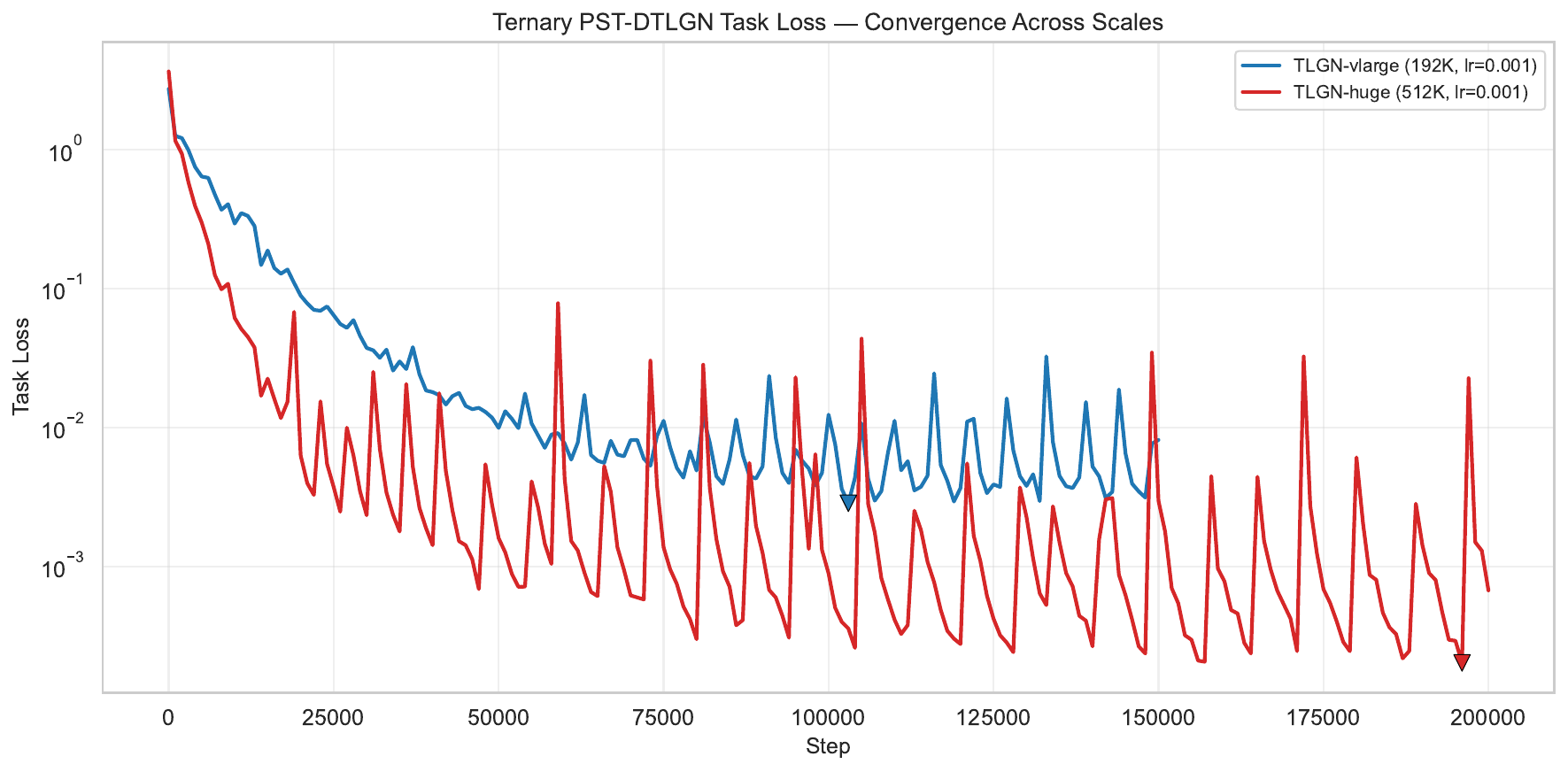}
  \caption{\textbf{Ternary task loss convergence} (\scvlarge{} and \schuge{}).
    Triangles mark the minimum. Both use lr${=}0.001$.
    \tlgn{}-\schuge{} reaches its floor (${\sim}10^{-3}$) by step 30K,
    matching \tlgn{}-\scvlarge{}'s asymptotic performance in 15\% of the
    training budget, then continues improving to $3 \times 10^{-4}$.
    Periodic spikes (visible as upward excursions) coincide with
    lambda annealing ramp-up.}
  \label{fig:ternary-overlay}
\end{figure}

\begin{table}[t]
  \centering
  \caption{\textbf{Training wall time} (single RTX 4090). Speed ratio =
    \dlgn{} time / \tlgn{} time.}
  \label{tab:speed}
  \small
  \begin{tabular}{@{}lrrrr@{}}
    \toprule
    Scale & Neurons & \dlgn{} (s) & \tlgn{} (s) & Speedup \\
    \midrule
    \scsmall{}  &  48K &  241 &  156 & 1.5$\times$ \\
    \scmedium{} &  96K &  587 &  292 & 2.0$\times$ \\
    \sclarge{}  & 144K & 1023 &  479 & 2.1$\times$ \\
    \scdeeper{} &  60K &  297 &  167 & 1.8$\times$ \\
    \scvlarge{} & 192K &  936 &  455 & 2.1$\times$ \\
    \schuge{}   & 512K & 4408 & 1401 & 3.1$\times$ \\
    \bottomrule
  \end{tabular}
\end{table}

\subsubsection{Hardening gap mechanism}
\label{sec:gap-mechanism}

Three regimes govern the gap trajectory.

\emph{Regime~I: Low capacity} (48K-60K).
Gap is modest ($2.0$-$2.8$\,pp). The polynomials have limited freedom; the gap
is small because there is little to lose in hardening. However, hardening error
is high (0.298): the polynomials are not close to valid gates, but per-neuron
rounding errors do not compound catastrophically at this scale.

\emph{Regime~II: Mid capacity} (96K-192K).
Gap peaks at $14.1$\,pp (\scmedium{}) and remains elevated through \scvlarge{}
($11.9$\,pp). The network develops rich polynomial representations far from
valid ternary truth tables. The commitment regularizer reduces hardening error
monotonically (0.253 $\to$ 0.067), but accumulated rounding through four layers
remains destructive.

\emph{Regime~III: High capacity} (512K).
Gap contracts to $3.7$\,pp with hardening error 0.029, meaning each truth table entry
is within 0.029 of the nearest ternary value on average. The contraction is
accompanied by a sharp increase in UNKNOWN output neurons ($24.6\%$ vs.\
${\sim}6\%$ at smaller scales), reflecting implicit pruning: neurons near
Voronoi cell boundaries in truth-table space are mapped to zero-output gates,
while surviving neurons carry sufficient discriminative signal.

\begin{table}[t]
  \centering
  \caption{\textbf{Hardening error and UNKNOWN\% across scales.}
    Hardening error drops $10\times$ while UNKNOWN\% increases $4\times$
    at \schuge{} scale.}
  \label{tab:gap-mechanism}
  \small
  \begin{tabular}{@{}lrrr@{}}
    \toprule
    Scale & \tlgn{} Gap (pp) & Hard Err & UNK\% \\
    \midrule
    \scsmall{}  & $+2.8$  & 0.298 &  5.9\% \\
    \scdeeper{} & $+2.0$  & 0.218 &  7.3\% \\
    \scmedium{} & $+14.1$ & 0.253 &  6.2\% \\
    \sclarge{}  & $+13.0$ & 0.074 &  6.0\% \\
    \scvlarge{} & $+11.9$ & 0.067 &  9.3\% \\
    \schuge{}   & $+3.7$  & 0.029 & 24.6\% \\
    \bottomrule
  \end{tabular}
\end{table}

\subsubsection{Gate diversity and Fourier spectral analysis}
\label{sec:appendix-diversity}

Table~\ref{tab:redundancy} reports gate diversity and functional redundancy.
The effective diversity ratio (\tlgn{}/\dlgn{}) ranges from 286$\times$
(\scsmall{}) to 523$\times$ (\sclarge{}).
Unique gates grow from 8,987 (\scsmall{}) to ${\sim}14{,}000$
(\sclarge{}-\scvlarge{}) then decrease slightly to 13,955 at \schuge{}, while
Gini increases monotonically (0.603 $\to$ 0.696) and effective diversity drops
from 7,590 to 5,437. The \schuge{} model uses fewer distinct gate types but
copies each more frequently.

\begin{table}[t]
  \centering
  \caption{\textbf{Gate diversity and functional redundancy.}
    Unique: distinct truth tables. Eff.\ Div: exponential Shannon entropy.
    Gini: concentration (higher = more concentrated).
    Max copies: neuron count of the most common gate.
    Singletons: gates used by exactly one neuron.}
  \label{tab:redundancy}
  \small
  \begin{tabular}{@{}lrrrrrr@{}}
    \toprule
    Model & Unique & Eff.\ Div & Gini & Redund.\% & Max Copies & Singletons \\
    \midrule
    \dlgn{}-\scsmall{}    &    16 &   14.9 & 0.176 & 100.0\% &  4,128 & 0 \\
    \tlgn{}-\scsmall{}    & 8,987 & 4,268  & 0.603 &  81.3\% &    224 & 3,149 \\
    \addlinespace
    \dlgn{}-\scmedium{}   &    16 &   14.7 & 0.209 & 100.0\% & 10,341 & 0 \\
    \tlgn{}-\scmedium{}   & 12,655& 6,719  & 0.587 &  86.8\% &    144 & 3,032 \\
    \addlinespace
    \dlgn{}-\sclarge{}    &    16 &   14.5 & 0.222 & 100.0\% & 16,620 & 0 \\
    \tlgn{}-\sclarge{}    & 14,375& 7,590  & 0.592 &  90.0\% &    190 & 2,578 \\
    \addlinespace
    \dlgn{}-\scvlarge{}   &    16 &   14.4 & 0.231 & 100.0\% & 23,263 & 0 \\
    \tlgn{}-\scvlarge{}   & 14,360& 7,239  & 0.610 &  92.5\% &    338 & 2,181 \\
    \addlinespace
    \dlgn{}-\schuge{}     &    16 &   14.2 & 0.251 & 100.0\% & 66,609 & 0 \\
    \tlgn{}-\schuge{}     & 13,955& 5,437  & 0.696 &  97.3\% &  1,490 & 1,710 \\
    \bottomrule
  \end{tabular}
\end{table}

Table~\ref{tab:fourier} decomposes the unique gates into Fourier complexity
bands. Larger networks learn more complex gates: linear energy decreases from
36.8\% (\scsmall{}) to 32.6\% (\sclarge{}) while quadratic and cubic energy
increase. The \scdeeper{} variant has the most linear profile (37.7\%),
consistent with narrower per-layer width limiting inter-neuron interaction
complexity. Over 98\% of gates are genuinely ternary at all scales.

\begin{table}[t]
  \centering
  \caption{\textbf{Fourier spectral profile} (\tlgn{} models).
    Percentage of total Fourier energy in each degree band.
    \%Ternary: fraction of unique gates that are genuinely ternary
    (not binary-equivalent).}
  \label{tab:fourier}
  \small
  \begin{tabular}{@{}lrcrrrrr@{}}
    \toprule
    Model & Unique & \%Ternary & Const & Linear & Quad & Cubic & Quartic \\
    \midrule
    \tlgn{}-\scsmall{}  & 8,987  & 98.6\% & 18.1\% & 36.8\% & 31.4\% & 11.0\% & 2.8\% \\
    \tlgn{}-\scmedium{} & 12,655 & 99.3\% & 17.4\% & 33.5\% & 33.1\% & 12.6\% & 3.3\% \\
    \tlgn{}-\sclarge{}  & 14,375 & 99.4\% & 17.0\% & 32.6\% & 33.5\% & 13.4\% & 3.5\% \\
    \tlgn{}-\scdeeper{} & 9,068  & 96.8\% & 23.6\% & 37.7\% & 26.8\% &  9.4\% & 2.5\% \\
    \bottomrule
  \end{tabular}
\end{table}

\subsubsection{Gate landscape geometry via MDS embedding}
\label{sec:mds}

The Fourier spectral profile (Table~\ref{tab:fourier}) summarizes
\emph{aggregate} complexity but does not reveal how the ${\sim}9{,}000$--$14{,}000$
unique gates are organized relative to one another. We apply multidimensional
scaling (MDS) under two complementary distance metrics: Hamming distance on
truth tables; and, Euclidean ($L^{2}$) distance on Fourier coefficient vectors;
to embed each unique gate as a point in $\mathbb{R}^{2}$.
Figure~\ref{fig:mds} shows the result for all four base scales.

\begin{figure}[t]
  \centering
  \includegraphics[width=\textwidth]{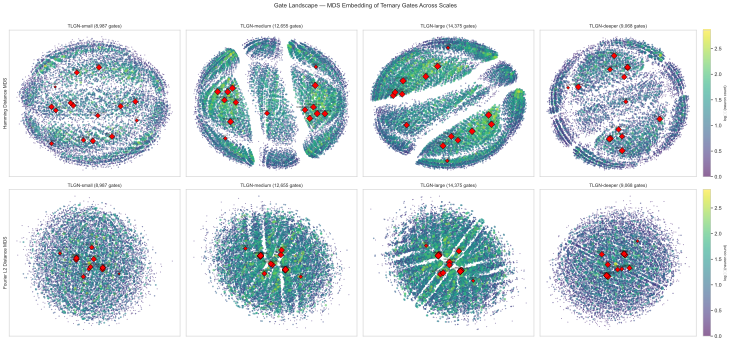}
  \caption{\textbf{Gate landscape - MDS embedding of ternary gates across
    scales.}
    Top row: Hamming distance (number of differing truth table entries out of~9).
    Bottom row: Fourier $L^{2}$ distance.
    Each point is a unique gate; area $\propto \log_{2}(\text{neuron count})$;
    colour encodes $\log_{10}(\text{neuron count})$.
    Red diamonds: the 15 named curated gates (AND, OR, XOR, etc.).
    Curated gates occupy the high-count core in all four embeddings,
    confirming that \pst{} rediscovers classical logic operations without
    any gate-vocabulary constraint.}
  \label{fig:mds}
\end{figure}

\paragraph{Clustering of learned gates around named gates.}
Under both metrics and at all four scales, the highest-count learned gates
(bright points, $\log_{10}(\text{count}) > 2$) cluster in the same region
as the 15 named curated gates (red diamonds). This co-location indicates
that \pst{}, despite searching an unconstrained space of 19,683 truth
tables, independently converges toward the neighbourhood of classical logic
operations (AND, OR, XOR, NAND, etc.). Low-count gates --- including the
${\sim}2{,}600$ singletons at \sclarge{} (Table~\ref{tab:redundancy}) -
are scattered at greater embedding distances from this core, consistent
with their role as rare, task-specific functions.

The two metrics provide complementary views. Hamming distance captures
how many truth table entries differ between gates (out of~9), so nearby
points in the top row are \emph{functionally} similar. Fourier $L^{2}$
distance captures differences in spectral energy distribution, so nearby
points in the bottom row share similar complexity profiles regardless of
which specific entries differ. The named gates occupy the high-count
core under both metrics, confirming that they are central in both the
functional and spectral sense.

\paragraph{Scale dependence.}
From \scsmall{} to \sclarge{}, the gate cloud grows denser as unique gates
increase (8,987 $\to$ 14,375) and effective diversity rises
(4,268 $\to$ 7,590; Table~\ref{tab:redundancy}).
The \scdeeper{} variant (rightmost column), despite a comparable neuron
count to \scsmall{}, produces a more compact cloud, consistent with its
higher constant-energy fraction (23.6\% vs.\ 18.1\%) and lower
quadratic-energy fraction (26.8\% vs.\ 31.4\%; Table~\ref{tab:fourier}).

\subsubsection{Per-class analysis}
\label{sec:perclass}

The per-class breakdown (Figure~\ref{fig:perclass}, Table~\ref{tab:perclass})
reveals that the \tlgn{} hardening gap is not uniform across classes.

\begin{figure}[t]
  \centering
  \includegraphics[width=\textwidth]{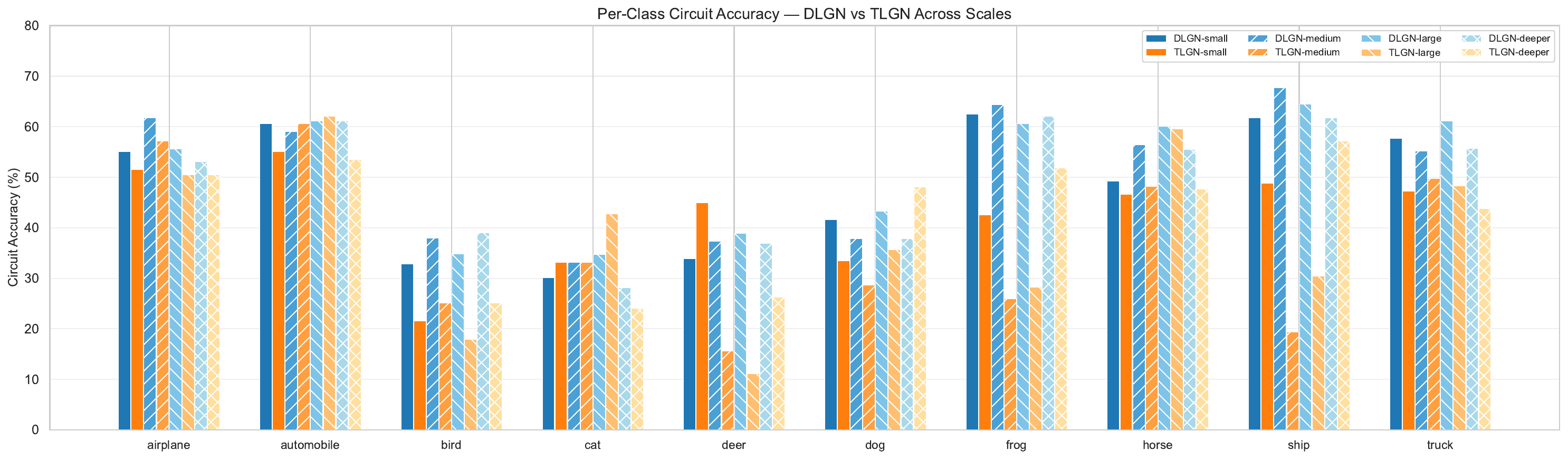}
  \caption{\textbf{Per-class circuit accuracy at smaller scales.} Blue: \dlgn{}. Orange: \tlgn{}.
    Solid: lower scale. Hatched: higher scale.
    \tlgn{} shows high class-level variance at \scsmall{}-\scdeeper{} scales,
    with catastrophic failures on visually complex classes (bird, cat, deer, frog).
    For larger scales (\scvlarge{}, \schuge{}) showing partial recovery, see Figure~\ref{fig:perclass-main} in the main text.}
  \label{fig:perclass}
\end{figure}

\begin{table}[t]
  \centering
  \caption{\textbf{Per-class circuit accuracy (\%) at selected scales.}
    Bold: best ternary result per class. $\Delta$: improvement from \scvlarge{}
    to \schuge{}. Classes with \scvlarge{} accuracy $< 30\%$ are marked
    with $\dagger$.}
  \label{tab:perclass}
  \small
  \begin{tabular}{@{}lcccc|ccr@{}}
    \toprule
    & \multicolumn{4}{c|}{\tlgn{}} & \multicolumn{2}{c}{\dlgn{} ref.} & \\
    Class & \scriptsize \scsmall{} & \scriptsize \sclarge{} &
            \scriptsize \scvlarge{} & \scriptsize \schuge{} &
            \scriptsize \scvlarge{} & \scriptsize \schuge{} & $\Delta_{\text{vl}\to\text{h}}$ \\
    \midrule
    airplane    & 51.5 & 50.5 & 62.8 & \textbf{61.2} & 58.7 & 60.2 & $-1.6$ \\
    automobile  & 55.1 & 62.1 & 55.6 & \textbf{56.6} & 58.6 & 65.2 & $+1.0$ \\
    bird$^\dagger$  & 21.5 & 17.9 & 18.5 & \textbf{50.8} & 37.9 & 43.6 & $+32.3$ \\
    cat$^\dagger$   & 33.2 & 42.7 & 13.6 & \textbf{50.8} & 32.7 & 30.7 & $+37.2$ \\
    deer$^\dagger$  & 44.9 & 11.1 & 24.2 & \textbf{25.8} & 36.4 & 40.9 & $+1.6$ \\
    dog         & 33.5 & 35.7 & \textbf{64.3} & 30.8 & 44.9 & 44.9 & $-33.5$ \\
    frog$^\dagger$  & 42.6 & 28.2 & 24.5 & \textbf{51.4} & 65.7 & 65.3 & $+26.9$ \\
    horse       & 46.6 & 59.6 & 37.8 & \textbf{46.1} & 55.4 & 56.5 & $+8.3$ \\
    ship        & 48.8 & 30.4 & 42.4 & \textbf{56.2} & 69.1 & 62.2 & $+13.8$ \\
    truck       & 47.3 & 48.3 & 51.2 & \textbf{52.7} & 57.1 & 58.6 & $+1.5$ \\
    \bottomrule
  \end{tabular}
\end{table}

\textbf{\tlgn{}-\schuge{} recovers on hard classes.}
The most dramatic improvements from \scvlarge{} to \schuge{} occur on visually
complex classes: bird ($+32.3$\,pp), cat ($+37.2$\,pp), frog ($+26.9$\,pp),
and ship ($+13.8$\,pp). At \schuge{} scale, \tlgn{} surpasses \dlgn{} on bird
(50.8\% vs.\ 43.6\%) and cat (50.8\% vs.\ 30.7\%).

\textbf{Dog class regression.}
\tlgn{}-\scvlarge{} achieved 64.3\% on dog but \tlgn{}-\schuge{} drops to
30.8\% ($-33.5$\,pp). The dog-cat confusability in CIFAR-10 and the $4\times$
increase in UNKNOWN neurons suggest a redistribution of discriminative capacity
rather than uniform improvement.

\textbf{\dlgn{} is more uniform.}
Binary circuit accuracy has lower per-class variance (CoV: 0.23 for
\dlgn{}-\schuge{} vs.\ 0.28 for \tlgn{}-\schuge{}) and dominates on ``easy''
classes (frog: 65.3\% vs.\ 51.4\%; automobile: 65.2\% vs.\ 56.6\%).

\subsubsection{Margin-based confidence}
\label{sec:margins}

Ternary circuits provide a natural confidence signal via the GroupSum margin
(difference between highest and second-highest class scores).

\begin{table}[t]
  \centering
  \caption{\textbf{Margin-based selective classification} (2K test samples).
    Margin(C/W): mean GroupSum margin for correct/wrong predictions.
    Separation: ratio of correct to wrong margin.
    Acc@$k$\%: accuracy when retaining only the $k$\% most confident samples.}
  \label{tab:margins}
  \small
  \begin{tabular}{@{}lcccccc@{}}
    \toprule
    Model & Circ Acc & Margin(C) & Margin(W) & Sep. & Acc@90\% & Acc@50\% \\
    \midrule
    \dlgn{}-\scsmall{}  & 48.9\% & 37.1 & 19.5 & 1.90$\times$ & 52.0\% & 64.3\% \\
    \tlgn{}-\scsmall{}  & 42.6\% & 38.4 & 22.9 & 1.68$\times$ & 44.8\% & 54.5\% \\
    \addlinespace
    \dlgn{}-\sclarge{}  & 51.7\% & 49.3 & 26.6 & 1.85$\times$ & 54.9\% & 66.1\% \\
    \tlgn{}-\sclarge{}  & 38.5\% & 66.3 & 41.3 & 1.61$\times$ & 39.9\% & 48.3\% \\
    \addlinespace
    \dlgn{}-\scvlarge{} & 52.0\% & 58.6 & 30.1 & 1.95$\times$ & 54.6\% & 66.3\% \\
    \tlgn{}-\scvlarge{} & 39.2\% & 85.6 & 51.9 & 1.65$\times$ & 41.2\% & 49.5\% \\
    \addlinespace
    \dlgn{}-\schuge{}   & 53.0\% & 106.4& 50.5 & 2.11$\times$ & 55.7\% & 70.3\% \\
    \tlgn{}-\schuge{}   & 48.4\% & 131.7& 73.9 & 1.78$\times$ & 50.6\% & 61.5\% \\
    \bottomrule
  \end{tabular}
\end{table}

Ternary margins are larger in absolute magnitude ($\{-1, 0, 1\}$ outputs
actively suppress competing classes), but \dlgn{} achieves better margin
\emph{separation}: 2.11$\times$ vs.\ 1.78$\times$ at \schuge{} scale, yielding
Acc@50\% of 70.3\% vs.\ 61.5\%.
Ternary separation improves with scale (1.65$\times$ $\to$ 1.78$\times$
from \scvlarge{} to \schuge{}; Acc@50\%: 49.5\% $\to$ 61.5\%), indicating
genuine improvement in discriminative quality of surviving neurons.

\subsection{Appendix: Supporting Experiments}
\label{sec:td-appendix}

\subsubsection{Decision boundary gallery}
\label{sec:td-boundaries}

Figure~\ref{fig:decision-boundaries} presents the full $5 \times 4$ decision
boundary gallery. Each row shows one dataset; columns display the raw data,
binary circuit boundary, ternary circuit boundary, and UNKNOWN density overlay.
Across all five datasets, the ternary UNKNOWN regions (orange) concentrate at
decision boundaries, confirming the pattern described in
Section~\ref{sec:td-contrast}.

\begin{figure}[htbp]
  \centering
  \includegraphics[width=\textwidth]{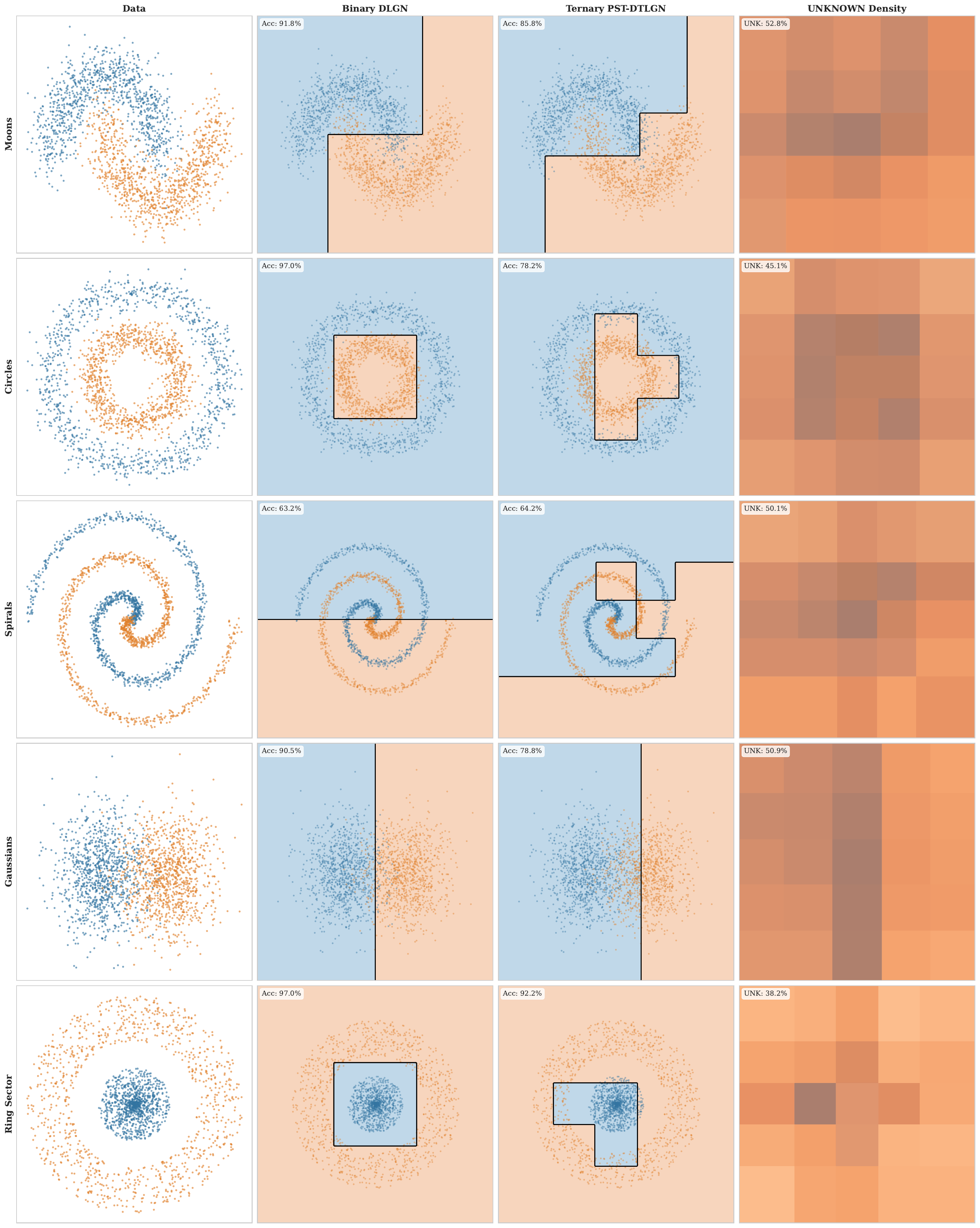}
  \caption{\textbf{Decision boundary gallery} ($5 \times 4$ grid).
    Rows: Moons, Circles, Spirals, Gaussians, Ring Sector.
    Columns: raw data, binary boundary, ternary boundary, UNKNOWN density.
    Orange regions mark where $>50\%$ of output neurons produce zero.}
  \label{fig:decision-boundaries}
\end{figure}

The UNKNOWN fraction ranges from 38.3\% (Ring Sector) to 52.9\% (Moons),
scaling with boundary complexity. Ring Sector, with its clean angular
separation, produces the smallest UNKNOWN band; Moons, with its high boundary
curvature and noise${}=0.5$, produces the largest.

\begin{table}[t]
  \centering
  \caption{\textbf{Gaussian separation sweep.}
    As separation increases, UNKNOWN fraction tracks the Bayes error.
    Binary accuracy matches the Bayes rate at all separations, confirming
    neither architecture is capacity-limited.}
  \label{tab:separation-sweep}
  \small
  \begin{tabular}{@{}rcccc@{}}
    \toprule
    Separation & Tern Acc & UNK\% & Bin Acc & Bayes \\
    \midrule
    $0.5\sigma$ & 59.8\% & 54.9\% & 68.2\% & 68.2\% \\
    $1.0\sigma$ & 71.0\% & 53.1\% & 83.5\% & 83.5\% \\
    $1.5\sigma$ & 78.7\% & 50.6\% & 90.5\% & 90.5\% \\
    $2.0\sigma$ & 85.5\% & 44.0\% & 97.8\% & 97.8\% \\
    $2.5\sigma$ & 92.2\% & 39.1\% & 99.0\% & 99.0\% \\
    $3.0\sigma$ & 93.5\% & 37.5\% & 99.8\% & 99.8\% \\
    \bottomrule
  \end{tabular}
\end{table}

\subsubsection{Accuracy vs.\ coverage curves}
\label{sec:td-coverage}

Figure~\ref{fig:accuracy-coverage} shows the full accuracy-vs-coverage curves
for three representative datasets. The monotonic increase confirms that the
GroupSum margin is a well-calibrated confidence proxy: low-margin predictions
are disproportionately incorrect, and abstaining on them systematically
improves accuracy on the retained set.

\begin{figure}[t]
  \centering
  \includegraphics[width=\textwidth]{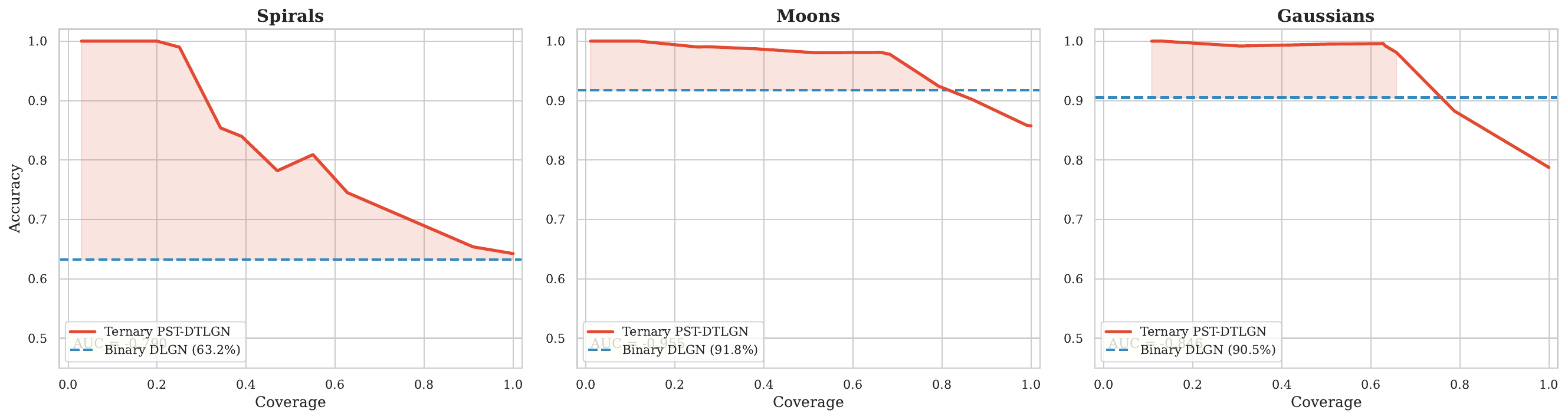}
  \caption{\textbf{Accuracy vs.\ coverage} (Spirals, Moons, Gaussians).
    As the circuit abstains on the lowest-margin samples, accuracy on the
    retained samples rises monotonically. The AUC quantifies selective
    prediction quality (closer to $-1.0$ is better).}
  \label{fig:accuracy-coverage}
\end{figure}

\subsubsection{Controlling the UNKNOWN band}
\label{sec:td-asymmetric}

The \texttt{delta\_fraction} parameter ($\delta \in [0, 1]$) controls the width
of the UNKNOWN encoding band. Table~\ref{tab:asymmetric} and
Figure~\ref{fig:asymmetric} show that peak accuracy on Moons occurs at
$\delta{=}0.50$ (95.0\%), not at either extreme. Notably, UNK\% does not reach
zero even at $\delta{=}0.0$ (42.0\%), because UNKNOWN outputs emerge from
internal truth tables, not only from the input encoding.

\begin{figure}[t]
  \centering
  \includegraphics[width=\textwidth]{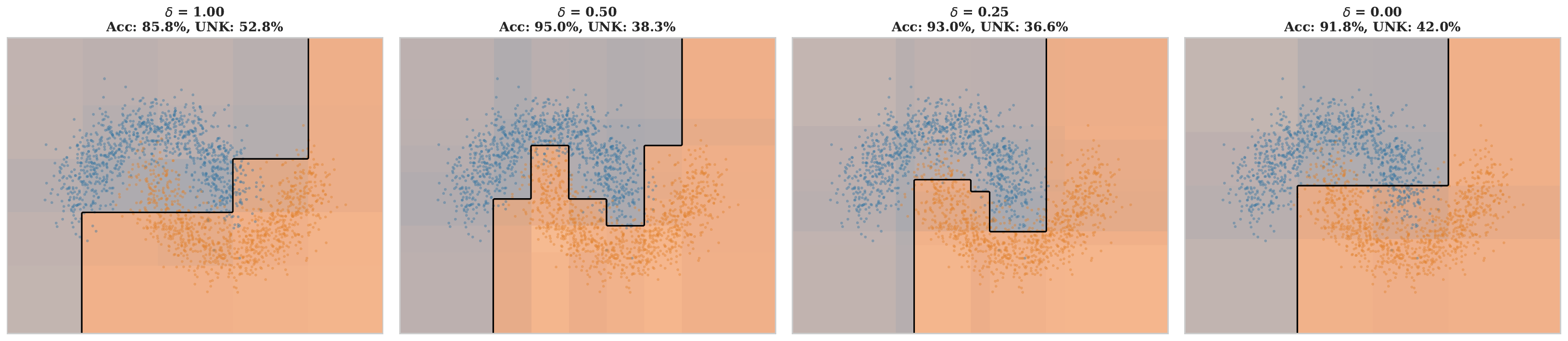}
  \caption{\textbf{Asymmetric thresholding} (Moons, four $\delta$ values).
    Peak accuracy (95.0\%) at $\delta{=}0.50$, not at the binary-like extreme.}
  \label{fig:asymmetric}
\end{figure}

\begin{table}[t]
  \centering
  \caption{\textbf{Asymmetric thresholding sweep} (Moons, test set).}
  \label{tab:asymmetric}
  \small
  \begin{tabular}{@{}rccc@{}}
    \toprule
    $\delta$ & Test Acc & UNK\% & Gap \\
    \midrule
    1.00 & 85.8\% & 52.8\% & 0.00\% \\
    0.50 & 95.0\% & 38.3\% & 0.00\% \\
    0.25 & 93.0\% & 36.6\% & 0.00\% \\
    0.00 & 91.7\% & 42.0\% & $-0.81$\% \\
    \bottomrule
  \end{tabular}
\end{table}

\subsubsection{Encoding resolution}
\label{sec:td-resolution}

Higher resolution simultaneously increases accuracy and decreases UNKNOWN rate
(Table~\ref{tab:resolution}, Figure~\ref{fig:resolution}): from 84.0\%
accuracy / 53.8\% UNK at $K{=}2$ to 95.5\% / 23.8\% at $K{=}16$, confirming
that the UNKNOWN band reflects input-level ambiguity resolvable by finer
quantization.

\begin{figure}[t]
  \centering
  \includegraphics[width=\textwidth]{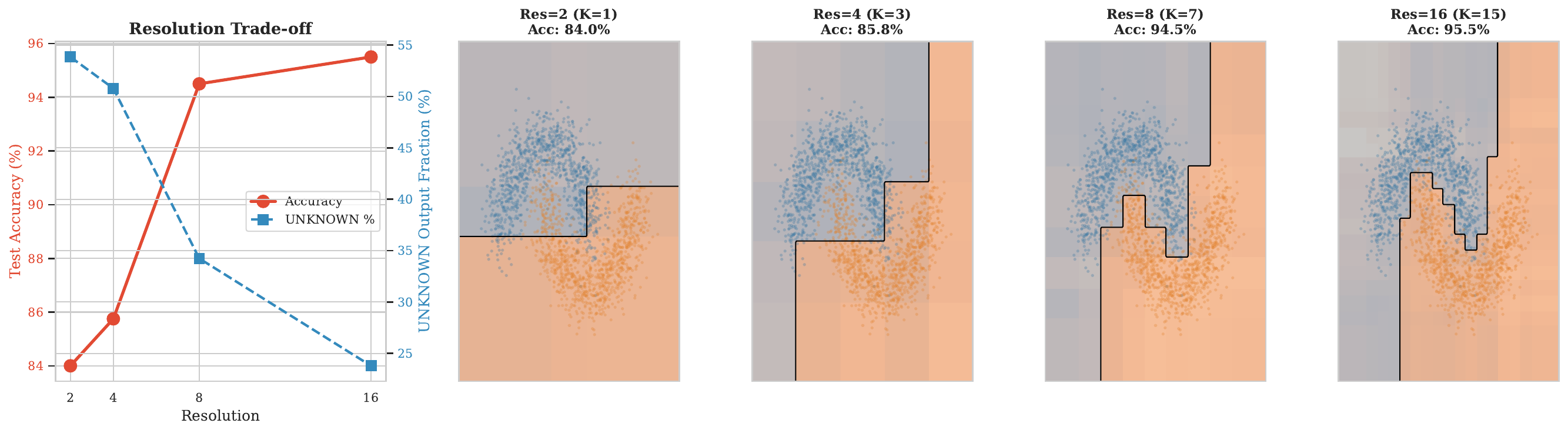}
  \caption{\textbf{Resolution vs.\ accuracy and UNKNOWN rate} (Moons).
    Higher resolution increases accuracy and decreases UNKNOWN fraction.}
  \label{fig:resolution}
\end{figure}

\begin{table}[t]
  \centering
  \caption{\textbf{Resolution sweep} (Moons, test set).}
  \label{tab:resolution}
  \small
  \begin{tabular}{@{}rcccc@{}}
    \toprule
    $K$ & Input Dim & Body Widths & Test Acc & UNK\% \\
    \midrule
     2 &  2 & $[128]^3$  & 84.0\% & 53.8\% \\
     4 &  6 & $[256]^3$  & 85.8\% & 50.8\% \\
     8 & 14 & $[512]^3$  & 94.5\% & 34.2\% \\
    16 & 30 & $[1024]^3$ & 95.5\% & 23.8\% \\
    \bottomrule
  \end{tabular}
\end{table}

\subsubsection{Gate diversity}
\label{sec:td-diversity}

\pst{} discovers ${\sim}1{,}200$ unique gates per dataset (out of 19,683), with
effective diversity ${\sim}1{,}050$, representing $70\times$ greater vocabulary
utilization than binary's fixed 16 (Table~\ref{tab:gate-diversity}).
Ternary redundancy is 28-31\% vs.\ binary's 99.1\%, reflecting the two
strategies: binary achieves expressiveness through dense repetition; ternary
through gate specialization.

\begin{table}[t]
  \centering
  \caption{\textbf{Gate diversity} (both architectures, all datasets).}
  \label{tab:gate-diversity}
  \small
  \begin{tabular}{@{}lrrc|rrc@{}}
    \toprule
    & \multicolumn{3}{c|}{Ternary} & \multicolumn{3}{c}{Binary} \\
    Dataset & Unique & Eff.\ Div & Red.\% & Unique & Eff.\ Div & Red.\% \\
    \midrule
    Moons       & 1,214 & 1,051.8 & 30.1\% & 16 & 15.3 & 99.1\% \\
    Circles     & 1,225 & 1,065.1 & 29.4\% & 16 & 15.6 & 99.1\% \\
    Spirals     & 1,224 & 1,062.2 & 29.5\% & 16 & 15.3 & 99.1\% \\
    Gaussians   & 1,201 & 1,037.9 & 30.8\% & 16 & 15.6 & 99.1\% \\
    Ring Sector & 1,253 & 1,089.1 & 27.8\% & 16 & 15.8 & 99.1\% \\
    \bottomrule
  \end{tabular}
\end{table}

\subsubsection{Hardening gap comparison}
\label{sec:td-hardening-gap}

Figure~\ref{fig:hardening-gap} compares the hardening gap across all datasets.
Both architectures achieve small gaps under discrete-input training (binary
$\leq 3.19$\%, ternary $\leq 8.31$\%). The near-zero ternary gaps on Moons
(0.00\%), Gaussians (0.31\%), and Ring Sector (0.38\%) validate the fidelity
of the \pst{} hardening procedure.

\begin{figure}[t]
  \centering
  \includegraphics[width=0.75\textwidth]{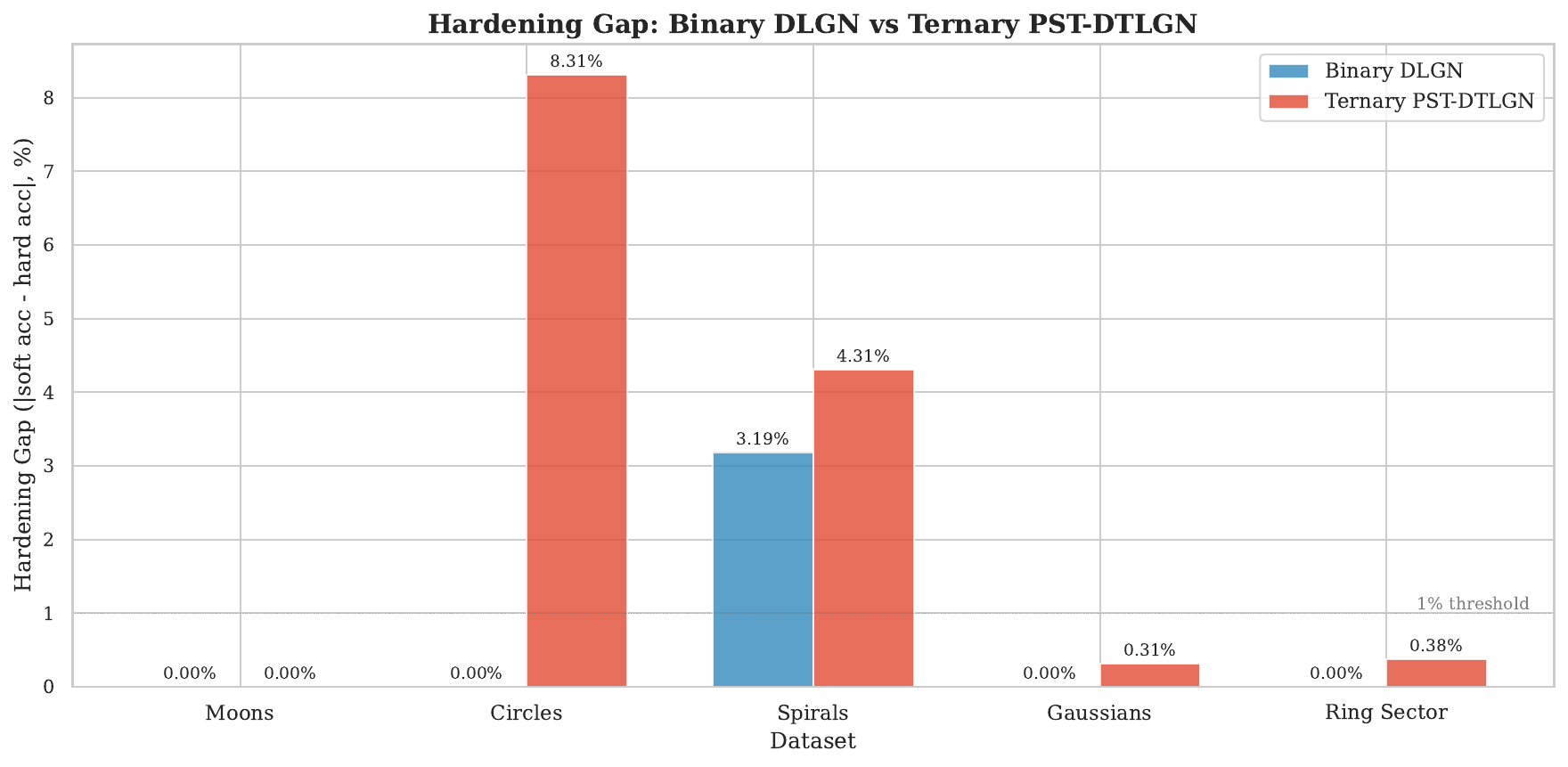}
  \caption{\textbf{Hardening gap} (all datasets). Both architectures achieve
    small gaps. The ternary gap is elevated on Circles and Spirals.}
  \label{fig:hardening-gap}
\end{figure}

  \section{Polynomial Representations for Boolean Logic Functions}
\label{appendix:polynomial_algebra}

Let $P_n$ denote the set of real multi-linear polynomials that represent Boolean functions over $n$ variables. Given real multi-linear polynomials $p,\,q\in P_n$, define the binary operators meet ($\wedge$) and join ($\vee$):
\[
    p\wedge q:=-\dfrac{1}{2}+\dfrac{1}{2}p(x)+\dfrac{1}{2}q(x)+\dfrac{1}{2}p(x)q(x),\qquad p\vee q:=\dfrac{1}{2}+\dfrac{1}{2}p(x)+\dfrac{1}{2}q(x)-\dfrac{1}{2}p(x)q(x),
\]
and define the unary operator complementation ($\neg$):
\[
    \neg p:=-p(x).
\]
Then $(P_n,\,\vee,\,\wedge,\,\neg,\,\bot,\,\top)$ is a Boolean algebra with the top element being the constant polynomial $p(x)=1$ and the bottom element being the constant polynomial $p(x)=-1$. For two-input Boolean functions ($n=2$), the Boolean algebra of real multi-linear polynomials is generated by the set $\{x_1,\,x_2\}$, meaning one can reconstruct all Boolean functions using the operations meet, join, and complement applied to the set of generators.

\subsection{The Boolean Group}
The exclusive not-or\footnote{Typically a Boolean group derived from a Boolean algebra uses symmetric difference (exclusive or) for group multiplication. If we instead defined \TRUE~as $-1$ and \FALSE~as $+1$ (intepreted as the square roots of unity), then XOR would be the intuitive choice satisfying $p\oplus q:=p(x)q(x)$.} (XNOR) of elements in the Boolean algebra can be defined as:
\[
    p\odot q:=p(x)q(x).
\]
The structure $(P_n,\,\odot,\,\top)$ is a finite abelian group, with the constant polynomial $p(x)=1$ serving as the identity element and each element acting as its own inverse, $p\odot p=\top$. This subtle notion allows us to employ the results of Fourier analysis on finite groups (see: \cite{terras1999fourier}). We restrict the remaining analysis to the case where $n=2$. Define the character homomorphism $\mathcal{X}:P_2\to\mathbb{R}$ as the trivial mapping $p\mapsto p(x)$ for $p\in P_2$, $x\in{\{-1,\,1\}}^2$. Characters satisfy the condition $\mathcal{X}(p\odot q)=\mathcal{X}(p)\mathcal{X}(q)$ for all $p,\,q\in P_2$. Define an inner product on the space of characters as:
\begin{equation}
    \langle\varphi,\,\psi\rangle=\dfrac{1}{4}\sum_{x\in{\{-1,\,1\}}^2}\varphi(x)\psi(x).\label{eq:boolean_ip}
\end{equation}
We are interested in identifying orthonormal functions with respect to \eqref{eq:boolean_ip} that form the basis for all representations of Boolean functions. Consider the character functions $\{1,\,x_1,\,x_2,\,x_1x_2\}\subset \mathcal{X}(P_2)$. These functions form an orthonormal basis for the characters of $P_2$. Enumerating the basis functions such that:
\[
    \varphi_0(x)=1,\quad\varphi_1(x)=x_1,\quad\varphi_2(x)=x_2,\quad\varphi_3(x)=x_1x_2,
\]
we have:
\[
    \langle \varphi_i,\,\varphi_j\rangle=\begin{cases}
        \;1,\quad& i=j, \\
        \;0,\quad&i\neq j.
    \end{cases}
\]
Furthermore, any $p\in P_2$, $\mathcal{X}(p)$ can be uniquely represented as a linear combination of these basis functions. Given $p\in P_2$, the Fourier coefficients are determined by:
\[
    w_i=\langle\mathcal{X}(p),\,\varphi_i\rangle,
\]
and we have then:
\[
    p(x)=\sum_{i=1}^{4}w_i\varphi_i(x).
\]
\end{document}